\documentclass{article} 
\usepackage{iclr2021_conference,times}


\usepackage{amsmath,amsfonts,bm}









\def\eqref#1{equation~\ref{#1}}









\def\1{\bm{1}}

\def\eps{{\epsilon}}










\DeclareMathAlphabet{\mathsfit}{\encodingdefault}{\sfdefault}{m}{sl}
\SetMathAlphabet{\mathsfit}{bold}{\encodingdefault}{\sfdefault}{bx}{n}











\newcommand{\R}{\mathbb{R}}



\usepackage{tablefootnote}
\usepackage{threeparttable}
\usepackage{enumitem}
\usepackage{hyperref}
\usepackage{url}
\usepackage{comment}
\usepackage{amsmath,amssymb}
\usepackage{amsthm}
\usepackage{algorithmic}
\usepackage[normalem]{ulem}
\usepackage[ruled,vlined]{algorithm2e}
\usepackage{graphicx}
\usepackage{soul}
\usepackage{booktabs}
\usepackage{overpic}


\newcommand{\mr}[1]{\mathrm{#1}}
\newcommand{\algname}{\texttt{KIP}\,}
\newcommand{\KIP}{\texttt{KIP}\,}
\newcommand{\mc}[1]{\mathcal{#1}}
\newcommand{\gm}{\gamma}
\newcommand{\algnamerho}{$\texttt{KIP}_{\rho}$}
\newcommand{\LS}{\texttt{LS}\,}

\newtheorem{theorem}{Theorem}

\theoremstyle{definition}
\newtheorem{definition}{Definition}

\SetKwFunction{Range}{range}
\SetKwInput{KwInput}{Input}
\SetKwInput{KwParams}{Parameters}

\title{Dataset Meta-Learning from Kernel Ridge-Regression}


\author{%
  Timothy Nguyen
  \quad Zhourong Chen
  \quad Jaehoon Lee\\[3pt]
  Google Research\\[3pt]
  \texttt{\{timothycnguyen, zrchen, jaehlee\}@google.com}
}

\iclrfinalcopy 
\begin{document}

\maketitle

\begin{abstract}
One of the most fundamental aspects of any machine learning algorithm is the training data used by the algorithm. We introduce the novel concept of $\epsilon$-approximation of datasets, obtaining datasets which are much smaller than or are significant corruptions of the original training data while maintaining similar model performance. We introduce a meta-learning algorithm called Kernel Inducing Points (\KIP) for obtaining such remarkable datasets, inspired by the recent developments in the correspondence between infinitely-wide neural networks and kernel ridge-regression (KRR). For KRR tasks, we demonstrate that \KIP can compress datasets by one or two orders of magnitude, significantly improving previous dataset distillation and subset selection methods while obtaining state of the art results for MNIST and CIFAR-10 classification. Furthermore, our \KIP-learned datasets are transferable to the training of finite-width neural networks even beyond the lazy-training regime, which leads to state of the art results for neural network dataset distillation with potential applications to privacy-preservation.

\end{abstract}

\section{Introduction}
Datasets are a pivotal component in any machine learning task. Typically, a machine learning problem regards a dataset as given and uses it to train a model according to some specific objective. In this work, we depart from the traditional paradigm by instead optimizing a dataset with respect to a learning objective, from which the resulting dataset can be used in a range of downstream learning tasks. 

Our work is directly motivated by several challenges in existing learning methods. Kernel methods or instance-based learning \citep{NIPS2016_6385, NIPS2017_6996, metriclearning} in general require a support dataset to be deployed at inference time. Achieving good prediction accuracy typically requires having a large support set, which inevitably increases both memory footprint and latency at inference time---\textit{the scalability issue}. It can also raise \textit{privacy concerns} when deploying a support set of original examples, e.g., distributing raw images to user devices. Additional challenges to scalability include, for instance, the desire for rapid hyper-parameter search \citep{shleifer2019using} and minimizing the resources consumed when replaying data for continual learning \citep{borsos2020coresets}. A valuable contribution to all these problems would be to find surrogate datasets that can mitigate the challenges which occur for naturally occurring datasets without a significant sacrifice in performance.

This suggests the following

\textbf{Question:} \emph{What is the space of datasets, possibly with constraints in regards to size or signal preserved, whose trained models are all (approximately) equivalent to some specific model?}

In attempting to answer this question, in the setting of supervised learning on image data, we discover a rich variety of datasets, diverse in size and human interpretability while also robust to model architectures, which yield high performance or state of the art (SOTA) results when used as training data. We obtain such datasets through the introduction of a novel meta-learning algorithm called Kernel Inducing Points (\algname). Figure \ref{fig:sample_images} shows some example images from our learned datasets.

\begin{figure}[h]
\centering
\begin{overpic}[width=0.4\linewidth]{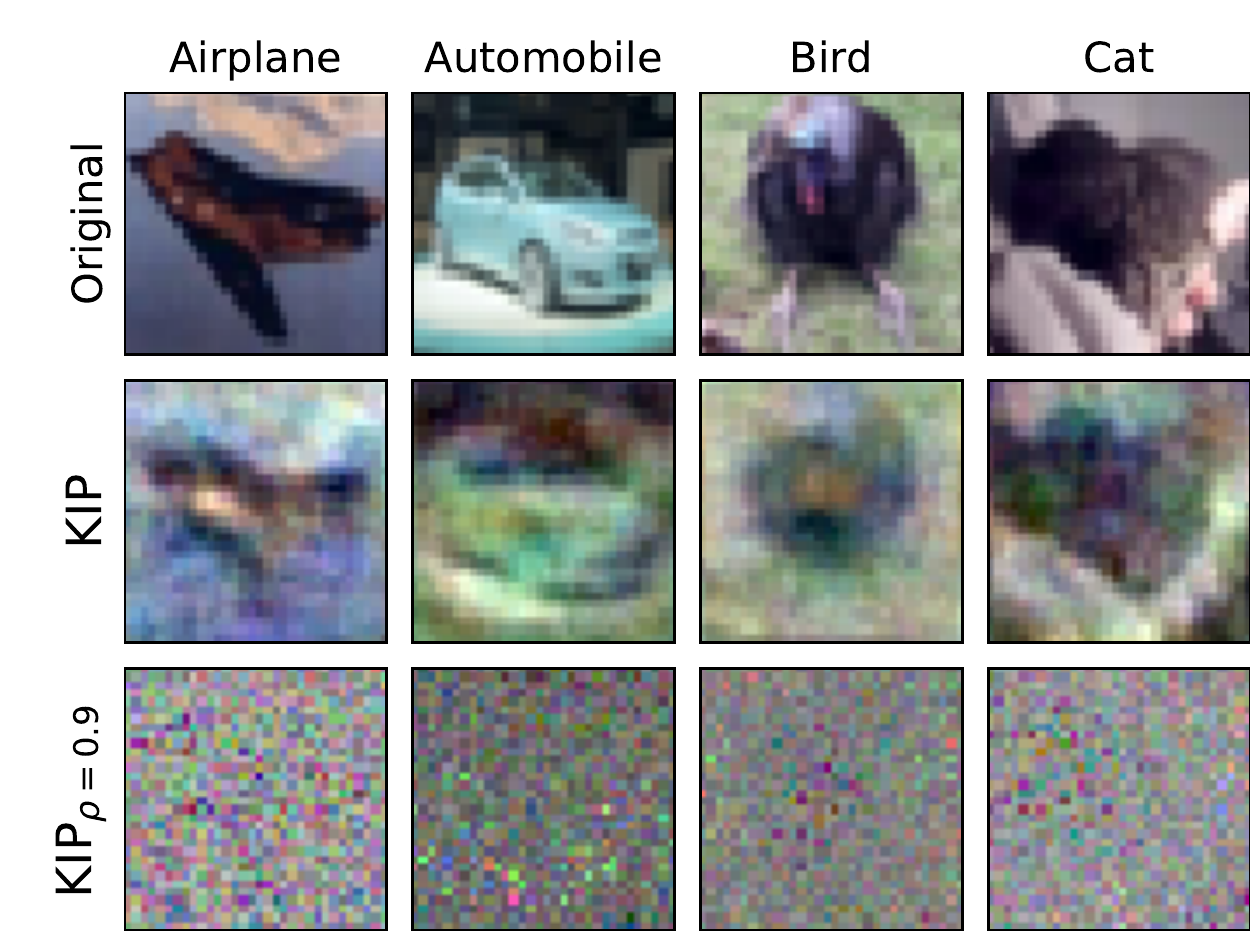}
 \put (0, 70) {\textbf{\small(a)}}
\end{overpic}
\hspace{0.7cm}
\begin{overpic}[width=.5\columnwidth]{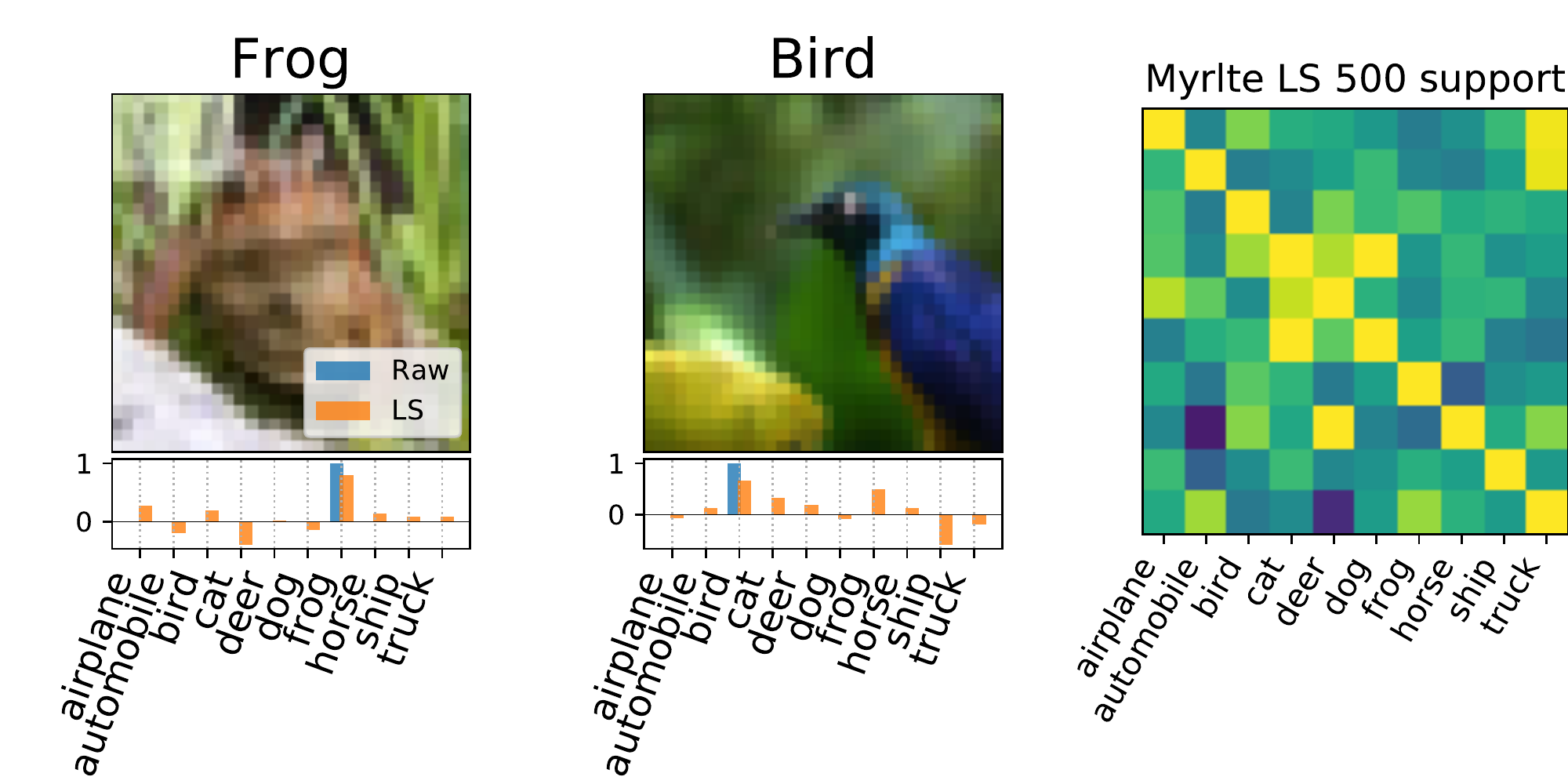}
\put (0, 55) {\textbf{\small(b)}}
\end{overpic}
\caption{\textbf{(a)} Learned samples of CIFAR-10 using \algname and its variant \algnamerho, for which $\rho$ fraction of the pixels are uniform noise. Using 1000 such images to train a 1 hidden layer fully connected network results in 49.2\% and 45.0\% CIFAR-10 test accuracy, respectively, whereas using 1000 original CIFAR-10 images results in 35.4\% test accuracy. \textbf{(b)} Example of labels obtained by label solving (\LS) (left two) and the covariance matrix between original labels and learned labels (right). Here, 500 labels were distilled from the CIFAR-10 train dataset using the the Myrtle 10-layer convolutional network. A test accuracy of 69.7\% is achieved using these labels for kernel ridge-regression.
}
\label{fig:sample_images}
\end{figure}

We explore \algname in the context of compressing and corrupting datasets, validating its effectiveness in the setting of kernel-ridge regression (KRR) and neural network training on benchmark datasets MNIST and CIFAR-10. Our contributions can be summarized as follows:

\subsection{Summary of contributions}
\begin{itemize}
    \item We formulate a novel concept of $\eps$-approximation of a dataset. This provides a theoretical framework for understanding dataset distillation and compression. 

    \item We introduce Kernel Inducing Points (\algname), a meta-learning algorithm for obtaining $\eps$-approximation of datasets. We establish convergence in the case of a linear kernel in Theorem~\ref{MainThm}. We also introduce a variant called Label Solve (\LS), which gives a closed-form solution for obtaining distilled datasets differing only via labels.
    
    \item We explore the following aspects of $\eps$-approximation of datasets:
    
    \begin{enumerate}
        \item \textbf{Compression (Distillation) for Kernel Ridge-Regression:} For kernel ridge regression, we improve sample efficiency by over one or two orders of magnitude, e.g. using 10 images to outperform hundreds or thousands of images  (Tables \ref{table:MNISTbaselines}, \ref{table:CIFAR10baselines} vs Tables \ref{table:subsample_mnist}, \ref{table:subsample_cifar10}). We obtain state of the art results for MNIST and CIFAR-10 classification while using few enough images (10K) to allow for in-memory inference (Tables \ref{table:sota-mnist-kernel}, \ref{table:sota-cifar10-kernel}).
    \item \textbf{Compression (Distillation) for Neural Networks:} We obtain state of the art dataset distillation results for the training of neural networks, often times even with only a single hidden layer fully-connected network (Tables \ref{table:MNISTbaselines} and \ref{table:CIFAR10baselines}).
    
        \item \textbf{Privacy:}  We obtain datasets with a strong trade-off between corruption and test accuracy, which suggests applications to privacy-preserving dataset creation. In particular, we produce images with up to 90\% of their pixels corrupted with limited degradation in performance as measured by test accuracy in the appropriate regimes (Figures \ref{fig:cifar-nn}, \ref{fig:kip_vs_natural_2x2}, and Tables  \ref{table:KIPtoNNMNIST}-\ref{table:CIFAR10_corruption_sweep_xent}) and which simultaneously outperform natural images, in a wide variety of settings.
    \end{enumerate}
   \item We provide an open source implementation of \algname and \LS, available in an interactive~\href{https://colab.research.google.com/github/google-research/google-research/blob/master/kip/KIP_open_source.ipynb}{Colab notebook}\footnote{\scriptsize \href{https://colab.research.google.com/github/google-research/google-research/blob/master/kip/KIP.ipynb}{https://colab.research.google.com/github/google-research/google-research/blob/master/kip/KIP.ipynb}}.
    
\end{itemize}

\section{Setup}

In this section we define some key concepts for our methods.

\begin{definition}
A \textit{dataset} in $\R^d$ is a set of $n$ distinct vectors in $\R^d$ for some $n \geq 1$. We refer to each such vector as a \textit{datapoint}. A dataset is \textit{labeled} if each datapoint is paired with a label vector in $\R^C$, for some fixed $C$. A datapoint along with its corresponding label is a \textit{labeled datapoint}. We use the notation $D = (X, y)$, where $X \in \R^{n \times d} $ and $y \in \R^{n \times C}$, to denote the tuple of unlabeled datapoints $X$ with their corresponding labels $y$.
\end{definition}

We henceforth assume all datasets are labeled. Next, we introduce our notions of approximation, both of functions (representing learned algorithms) and of datasets, which are characterized in terms of performance with respect to a loss function rather than closeness with respect to a metric. A loss function $\ell: \R^C \times \R^C \to \R$ is one that is nonnegative and satisfies $\ell(z,z) = 0$ for all $z$.

\begin{definition}\label{def:close}
Fix a loss function $\ell$ and let $f, \tilde f: \R^d \to \R^C$ be two functions. Let $\eps \geq 0$.
\begin{enumerate}
    \item Given a distribution $\mc{P}$ on $\R^d \times \R^C$, we say $f$ and $\tilde f$ are \textit{weakly $\eps$-close} with respect to $(\ell, \mc{P})$ if
\begin{equation}\Big|\mathbb{E}_{(x,y) \sim \mc{P}}\Big(\ell(f(x), y)\Big) - \mathbb{E}_{(x,y) \sim \mc{P}}
\Big(\ell(\tilde f(x), y)\Big)\Big| \leq \eps. \label{eq:weak-approx}
\end{equation}
\item Given a distribution $\mc{P}$ on $\R^d$ we say $f$ and $\tilde f$ are \textit{strongly $\eps$-close} with respect to $(\ell, \mc{P})$ if
\begin{equation}
\mathbb{E}_{x \sim \mc{P}}\Big(\ell(f(x), \tilde f(x))\Big) \leq \eps. \label{eq:strong-approx}
\end{equation}
\end{enumerate}
We drop explicit reference to $(\ell, \mc{P})$ if their values are understood or immaterial.
\end{definition}

Given a learning algorithm $A$ (e.g. gradient descent with respect to the loss function of a neural network), let $A_D$ denote the resulting model obtained after training $A$ on $D$. We regard $A_D$ as a mapping from datapoints to prediction labels.\\

\begin{definition}\label{def:approximation}
Fix learning algorithms $A$ and $\tilde A$. Let $D$ and $\tilde D$ be two labeled datasets in $\R^d$ with label space $\R^C$. Let $\eps \geq 0$. We say $\tilde{D}$ is a \textit{weak $\eps$-approximation} of $D$ with respect to $(\tilde{A}, A, \ell, \mc{P})$ if ${\tilde A}_{\tilde{D}}$ and $A_D$ are weakly $\eps$-close with respect to $(\ell, \mc{P})$, where $\ell$ is a loss function and $\mc{P}$ is a distribution on $\R^d \times \R^C$. We define \textit{strong $\eps$-approximation} similarly. We drop explicit reference to (some of) the $\tilde{A}, A, \ell, \mc{P}$ if their values are understood or immaterial.
\end{definition}

We provide some justification for this definition in the Appendix. In this paper, we will measure $\eps$-approximation with respect to $0$-$1$ loss for multiway classification (i.e. accuracy). We focus on weak $\eps$-approximation, since in most of our experiments, we consider models in the low-data regime with large classification error rates, in which case, sample-wise agreement of two models is not of central importance. On the other hand, observe that if two models have population classification error rates less than $\eps/2$, then (\ref{eq:strong-approx}) is automatically satisfied, in which case, the notions of weak-approximation and strong-approximation converge. 

We list several examples of $\eps$-approximation, with $\eps = 0$, for the case when $\tilde A = A$ are given by the following: 

\textbf{Example 1: Support Vector Machines.} Given a dataset $D$ of size $N$, train an SVM on $D$ and obtain $M$ support vectors. These $M$ support vectors yield a dataset $\tilde{D}$ that is a strong $0$-approximation to $D$ in the linearly separable case, while for the nonseparable case, one has to also include the datapoints with positive slack. Asymptotic lower bounds asserting $M = O(N)$ have been shown in \cite{steinwart2003sparseness}.\footnote{As a specific example, many thousands of support vectors are needed for MNIST classification (\cite{bordes2005fast}).}

\textbf{Example 2: Ridge Regression.}
Any two datasets $D$ and $\tilde D$ that determine the same ridge-regressor are $0$-approximations of each other. In particular, in the scalar case, we can obtain arbitrarily small $0$-approximating $\tilde D$ as follows. Given training data $D = (X,y)$ in $\R^d$, the corresponding ridge-regressor is the predictor
\begin{align}
x^* & \mapsto w \cdot x^*, \label{eq:lambda-RR} \\
w &= \Phi_\lambda(X)y, \label{eq:wRR} \\
\Phi_\lambda(X) &= X^T(XX^T+\lambda I)^{-1} \label{eq:Phi_lambda}
\end{align}
where for $\lambda = 0$, we interpret the inverse as a pseudoinverse. It follows that for any given $w \in \R^{d \times 1}$, we can always find $(\tilde X, \tilde y)$ of arbitrary size (i.e. $\tilde X \in \R^{n \times d}$, $y \in \R^{n \times 1}$ with $n$ arbitrarily small) that satisfies $w = \Phi_\lambda(\tilde X)\tilde y$. Simply choose $\tilde X$ such that $w$ is in the range of $\Phi_\lambda(\tilde X)$. The resulting dataset $(\tilde X, \tilde y)$ is a $0$-approximation to $D$. If we have a $C$-dimensional regression problem, the preceding analysis can be repeated component-wise in label-space to show $0$-approximation with a dataset of size at least $C$ (since then the rank of $\Phi_\lambda(\tilde X)$ can be made at least the rank of $w \in \R^{d \times C}$).

We are interested in learning algorithms given by KRR and neural networks. These can be investigated in unison via neural tangent kernels. Furthermore, we study two settings for the usage of $\eps$-approximate datasets, though there are bound to be others:

\begin{enumerate}
\item (Sample efficiency / compression) Fix $\eps$. What is the minimum size of $\tilde{D}$ needed in order for $\tilde{D}$ to be an $\eps$-approximate dataset?
\item (Privacy guarantee) Can an $\eps$-approximate dataset be found such that the distribution from which it is drawn and the distribution from which the original training dataset is drawn satisfy a given upper bound in mutual information?
\end{enumerate}

Motivated by these questions, we introduce the following definitions:

\begin{definition}(Heuristic)
Let $\tilde D$ and $D$ be two datasets such that $\tilde D$ is a weak $\eps$-approximation of $D$, with $|\tilde D| \leq |D|$ and $\eps$ small. We call $|D|/|\tilde D|$ the \textit{compression ratio}.
\end{definition}

In other words, the compression ratio is a measure of how well $\tilde D$ compresses the information available in $D$, as measured by approximate agreement of their population loss. Our definition is heuristic in that $\eps$ is not precisely quantified and so is meant as a soft measure of compression. 

\begin{definition}\label{def:corrupt}
Let $\Gamma$ be an algorithm that takes a dataset $D$ in $\R^d$ and returns a (random) collection of datasets in $\R^d$. For $0 \leq \rho \leq 1$, we say that $\Gamma$ is \textit{$\rho$-corrupted} if for any input dataset $D$, every datapoint\footnote{We ignore labels in our notion of $\rho$-corrupted since typically the label space has much smaller dimension than that of the datapoints.} drawn from the datasets of $\Gamma(D)$ has at least $\rho$ fraction of its coordinates independent of $D$. 
\end{definition}
In other words, datasets produced by $\Gamma$ have $\rho$ fraction of its entries  contain no information about the dataset $D$ (e.g. because they have a fixed value or are filled in randomly). Corrupting information is naturally a way of enhancing privacy, as it makes it more difficult for an attacker to obtain useful information about the data used to train a model. Adding noise to the inputs to neural network or of its gradient updates can be shown to provide differentially private guarantees (\cite{Abadi_2016}). 

\section{Kernel Inducing Points}\label{sec:KIP}

Given a dataset $D$ sampled from a distribution $\mc{P}$, we want to find a small dataset $\tilde{D}$ that is an $\eps$-approximation to $D$ (or some large subset thereof) with respect to $(\tilde A, A, \ell, \mc{P})$. Focusing on $\tilde A = A$ for the moment, and making the approximation
\begin{equation}
    \mathbb{E}_{(x,y) \in \mc{P}}\,\ell(\tilde{A}_{\tilde{D}}(x), y) \approx \mathbb{E}_{(x,y) \in D}\,\ell(\tilde{A}_{\tilde{D}}(x), y), \label{eq:Eapprox}
\end{equation}
this suggests we should optimize the right-hand side of (\ref{eq:Eapprox}) with respect to $\tilde{D}$, using $D$ as a validation set. For general algorithms $\tilde A$, the outer optimization for $\tilde{D}$ is computationally expensive and involves second-order derivatives, since one has to optimize over the inner loop encoded by the learning algorithm $\tilde A$. We are thus led to consider the class of algorithms drawn from kernel ridge-regression. The reason for this are two-fold. First, KRR performs convex-optimization resulting in a closed-form solution, so that when optimizing for the training parameters of KRR (in particular, the support data), we only have to consider first-order optimization. Second, since KRR for a neural tangent kernel (NTK) approximates the training of the corresponding wide neural network~\citep{Jacot2018ntk, lee2019wide,arora2019on,lee2020finite}, we expect the use of neural kernels to yield $\eps$-approximations of $D$ for learning algorithms given by a broad class of neural networks trainings as well. (This will be validated in our experiments.) 

This leads to our first-order meta-learning algorithm \algname (Kernel Inducing Points), which uses kernel-ridge regression to learn $\eps$-approximate datasets. It can be regarded as an adaption of the inducing point method for Gaussian processes~\citep{snelson2006sparse} to the case of KRR. Given a kernel $K$, the KRR loss function trained on a support dataset $(X_s, y_s)$ and evaluated on a target dataset $(X_t, y_t)$ is given by
\begin{equation}
L(X_s, y_s) = \frac{1}{2}\|y_t - K_{X_tX_s}(K_{X_sX_s}+\lambda I)^{-1}y_s\|^2, \label{eq:loss}
\end{equation}
where if $U$ and $V$ are sets, $K_{UV}$ is the matrix of kernel elements $(K(u,v))_{u \in U, v \in V}$.
Here $\lambda > 0$ is a fixed regularization parameter. The \algname algorithm consists of optimizing (\ref{eq:loss}) with respect to the support set (either just the $X_s$ or along with the labels $y_s$), see Algorithm \ref{algo:kip}. Depending on the downstream task, it can be helpful to use families of kernels (Step 3) because then \algname produces datasets that are $\eps$-approximations for a variety of kernels instead of a single one. This leads to a corresponding robustness for the learned datasets when used for neural network training. We remark on best experimental practices for sampling methods and initializations for \algname in the Appendix. Theoretical analysis for the convergence properties of \algname for the case of a linear kernel is provided by Theorem \ref{MainThm}. Sample \KIP-learned images can be found in Section \ref{sec:Examples}.

\textbf{KIP variations:} i) We can also randomly augment the sampled target batches in \KIP. This effectively enhances the target dataset $(X_t, y_t)$, and we obtain improved results in this way, with no extra computational cost with respect to the support size. ii) We also can choose a corruption fraction $0 \leq \rho < 1$ and do the following. Initialize a random $\rho$-percent of the coordinates of each support datapoint via some corruption scheme (zero out all such pixels or initialize with noise). Next, do not update such corrupted coordinates during the \KIP training algorithm (i.e. we only perform gradient updates on the complementary set of coordinates). Call this resulting algorithm \algnamerho. In this way, \algnamerho\, is $\rho$-corrupted according to Definition \ref{def:corrupt} and we use it to obtain our highly corrupted datasets.

\textbf{Label solving:} In addition to \algname, where we learn the support dataset via gradient descent, we propose another inducing point method, Label Solve (\LS), in which we directly find the minimum of (\ref{eq:loss}) with respect to the support labels while holding $X_s$ fixed. This is simple because the loss function is quadratic in $y_s$. We refer to the resulting labels 
\begin{align}
y_s^* &= \Phi_{0}\Big(K_{X_tX_s}(K_{X_sX_s}+\lambda I)^{-1}\Big)y_t \label{eq:label_solve} 
\end{align}
as \textit{solved labels}. As $\Phi_0$ is the pseudo-inverse operation, $y_s^*$ is the minimum-norm solution among minimizers of (\ref{eq:loss}). If $K_{X_tX_s}$ is injective, using the fact that $\Phi_0(AB) = \Phi_0(B)\Phi_0(A)$ for $A$ injective and $B$ surjective (\cite{10.2307/2027337}), we can rewrite (\ref{eq:label_solve}) as 
\begin{align*}
y_s^* &= (K_{X_sX_s}+\lambda I) \Phi_0(K_{X_t X_s}) \, y_t. 
\end{align*}

\begin{algorithm}[t]
\SetAlgoLined
\begin{algorithmic}[1]
\REQUIRE A target labeled dataset $(X_t, y_t)$ along with a kernel or family of kernels.
\STATE Initialize a labeled support set $(X_s, y_s)$.
\WHILE{not converged}
    \STATE Sample a random kernel. Sample a random batch $(\bar X_s, \bar y_s)$ from the support set. Sample a random batch $(\bar X_t, \bar y_t)$ from the target dataset.
    \STATE Compute the kernel ridge-regression loss given by (\ref{eq:loss}) using the sampled kernel and the sampled support and target data.
    \STATE Backpropagate through $\bar X_s$ (and optionally $\bar y_s$ and any hyper-parameters of the kernel) and update the support set $(X_s, y_s)$ by updating the subset $(\bar X_s, \bar y_s)$.
\ENDWHILE
\RETURN Learned support set $(X_s, y_s)$
\end{algorithmic}
 \caption{Kernel Inducing Point (\algname)}
 \label{algo:kip}
\end{algorithm}

\section{Experiments}

We perform three sets of experiments to validate the efficacy of \KIP and \LS for dataset learning. The first set of experiments investigates optimizing \KIP and \LS for compressing datasets and achieving state of the art performance for individual kernels. The second set of experiments explores transferability of such learned datasets across different kernels. The third set of experiments investigate the transferability of \KIP-learned datasets to training neural networks. The overall conclusion is that $\KIP$-learned datasets, even highly corrupted versions, perform well in a wide variety of settings. Experimental details can be found in the Appendix.

We focus on  MNIST~\citep{lecun2010mnist} and CIFAR-10~\citep{krizhevsky2009learning} datasets for comparison to previous methods. 
For \LS, we also use Fashion-MNIST. These classification tasks are recast as regression problems by using mean-centered one-hot labels during training and by making class predictions via assigning the class index with maximal predicted value during testing. All our kernel-based experiments use the Neural Tangents library~\citep{neuraltangents2020}, built on top of JAX~\citep{jax2018github}. In what follows, we use FC$m$ and Conv$m$ to denote a depth $m$ fully-connected or fully-convolutional network. Whether we mean a finite-width neural network or else the corresponding neural tangent kernel (NTK) will be understood from the context. We will sometimes also use the neural network Gaussian process (NNGP) kernel associated to a neural network in various places. By default, a neural kernel refers to NTK unless otherwise stated. RBF denotes the radial-basis function kernel. Myrtle-$N$ architecture follows that of ~\citet{shankar2020neural}, where an $N$-layer neural network consisting of a simple combination of $N-1$ convolutional layers along with $(2,2)$ average pooling layers are inter-weaved to reduce internal patch-size. 

We would have used deeper and more diverse architectures for \KIP, but computational limits, which will be overcome in future work, placed restrictions, see the Experiment Details in Section \ref{sec:ExpDetails}. 

\subsection{Single Kernel Results}

We apply \algname to learn support datasets of various sizes for MNIST and CIFAR-10. The objective is to distill the entire training dataset down to datasets of various fixed, smaller sizes to achieve high compression ratio. We present these results against various baselines in Tables \ref{table:MNISTbaselines} and \ref{table:CIFAR10baselines}. These comparisons occur cross-architecturally, but aside from Myrtle LS results, all our results involve the simplest of kernels (RBF or FC1), whereas prior art use deeper architectures (LeNet, AlexNet, ConvNet). 

We obtain state of the art results for KRR on MNIST and CIFAR-10, for the RBF and FC1 kernels, both in terms of accuracy and number of images required, see Tables \ref{table:MNISTbaselines} and \ref{table:CIFAR10baselines}. In particular, our method produces datasets such that RBF and FC1 kernels fit to them rival the performance of deep convolutional neural networks on MNIST (exceeding 99.2\%).
By comparing Tables \ref{table:CIFAR10baselines} and \ref{table:subsample_cifar10}, we see that, e.g. 10 or 100 \KIP images for RBF and FC1 perform on par with tens or hundreds times more natural images, resulting in a compression ratio of one or two orders of magnitude. 

For neural network trainings, for CIFAR-10, the second group of rows in Table  \ref{table:CIFAR10baselines} shows that FC1 trained on \KIP images outperform prior art, all of which have deeper, more expressive architectures. On MNIST, we still outperform some prior baselines with deeper architectures. This, along with the state of the art KRR results, suggests that \KIP, when scaled up to deeper architectures, should continue to yield strong neural network performance.

For \LS, we use a mix of NNGP kernels\footnote{For FC1, NNGP and NTK perform comparably whereas for Myrtle, NNGP outperforms NTK.} and NTK kernels associated to FC1, Myrtle-5, Myrtle-10 to learn labels on various subsets of MNIST, Fashion-MNIST, and CIFAR-10. Our results comprise the bottom third of Tables \ref{table:MNISTbaselines} and \ref{table:CIFAR10baselines} and Figure \ref{fig:label-solve-cifar-nn}. As Figure \ref{fig:label-solve-cifar-nn} shows, the more targets are used, the better the performance. When all possible targets are used, we get an optimal compression ratio of roughly one order of magnitude at intermediate support sizes.

\begin{figure}[h!]
\centering
\includegraphics[width=0.23\columnwidth]{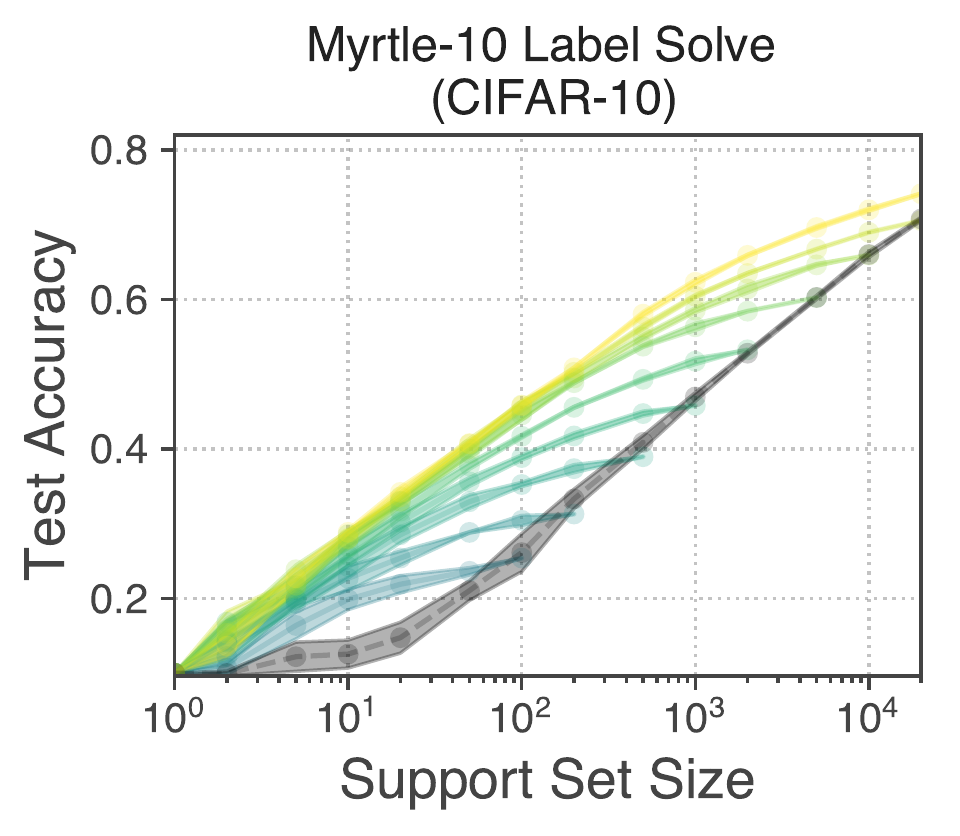}
 \includegraphics[width=0.23\columnwidth]{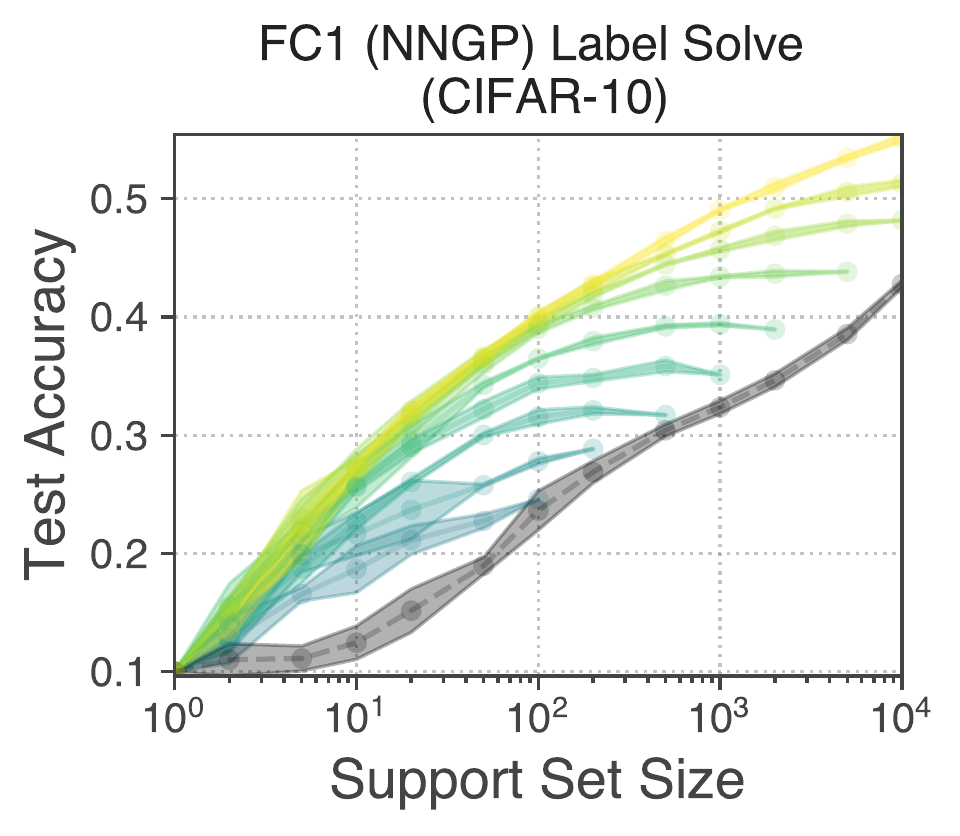}
\includegraphics[width=0.23\columnwidth]{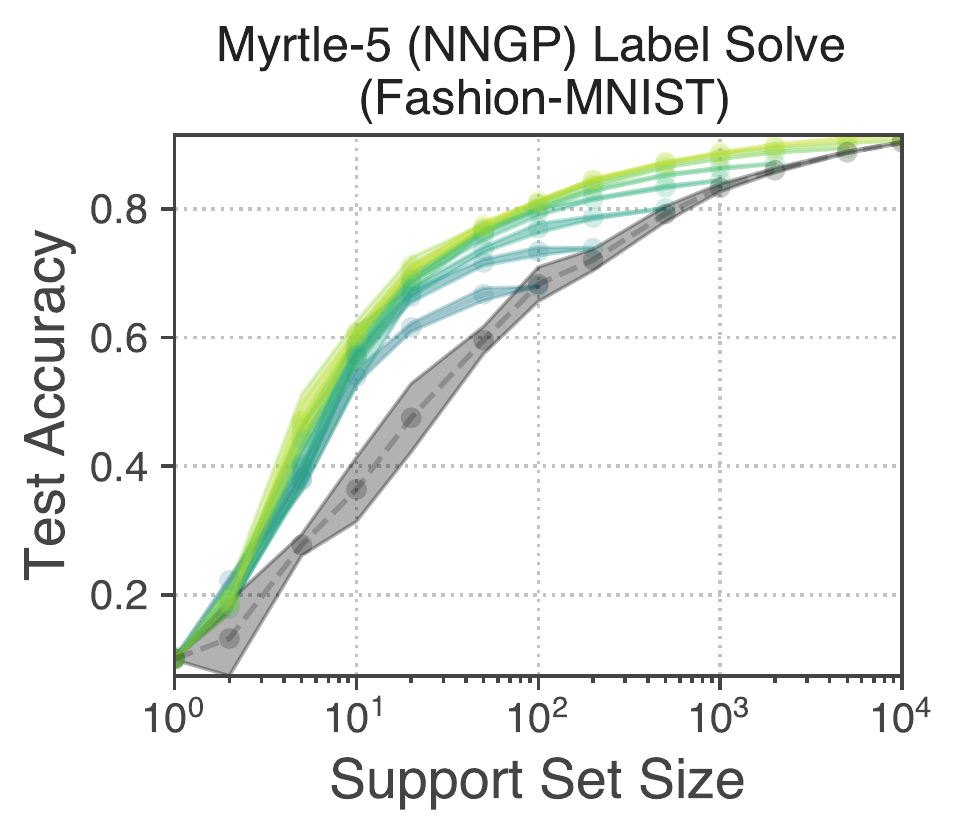}
 \includegraphics[width=0.23\columnwidth]{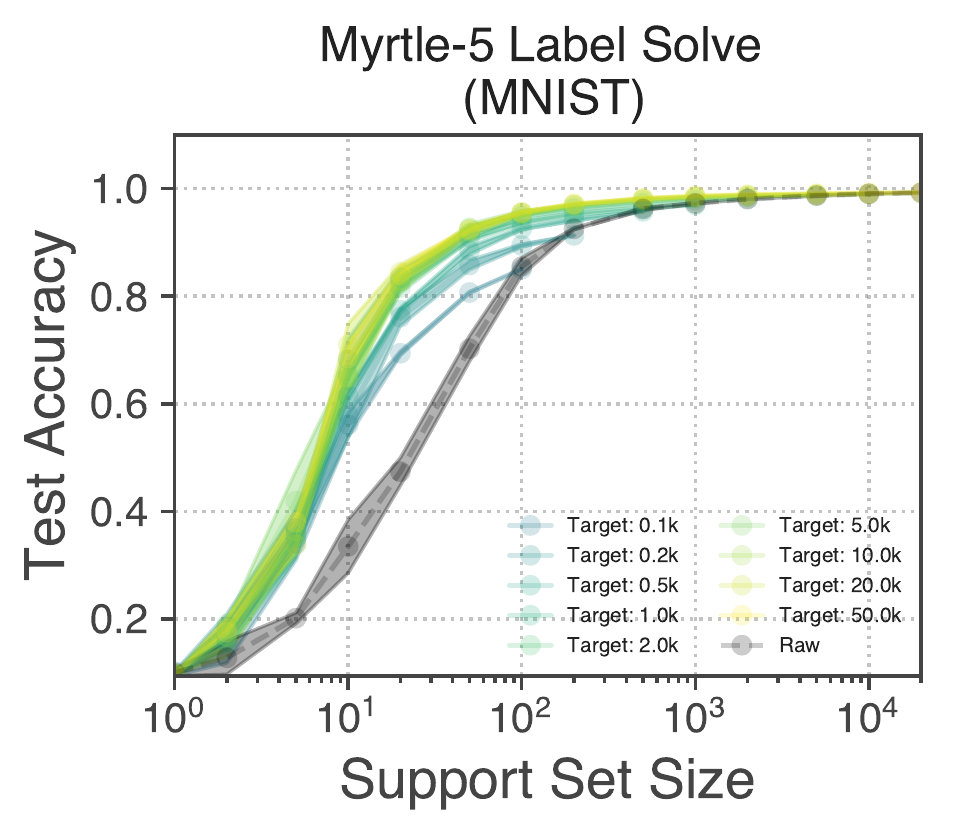}
\caption{\textbf{LS performance for Myrtle-(5/10) and FC on CIFAR-10/Fashion-MNIST/MNIST.} Results computed over 3 independent samples per support set size.}
\label{fig:label-solve-cifar-nn}
\end{figure}

\begin{table}[ht!]
\centering
\begin{threeparttable}
\caption{\label{table:MNISTbaselines}\textbf{MNIST: KIP and LS vs baselines.} 
Comparing KRR (kernel ridge-regression) and NN (neural network) algorithms using various architectures and dataset distillation methods on datasets of varying sizes (10 to 10K).
}
\begin{tabular}{ll|ccccccc} \toprule
\textbf{Alg.} & \textbf{Arch., Method}      & 10 & 100 & 500 & 5000 & 10000 \\ \midrule
KRR & RBF, KIP & 89.60$\pm$0.09 & 97.31$\pm$0.09 & 98.29$\pm$0.06 & 98.70$\pm$0.04 & 98.74$\pm$0.04 \\
KRR & RBF, KIP (a + l)\tnote{1} & \textbf{90.63$\pm$0.27} & \textbf{97.84$\pm$0.06} & \textbf{98.85$\pm$0.04} & \textbf{99.31$\pm$0.04} & \textbf{99.34$\pm$0.03}\\ 
KRR & FC1, KIP          & 89.30$\pm$0.01   &  96.64$\pm$0.08  & 97.64$\pm$0.06    & 98.52$\pm$0.04 & 98.59$\pm$0.05          \\
KRR & FC1, KIP (a + l)  & 85.46$\pm$0.04    & 97.15$\pm$0.11   &  98.36$\pm$0.08  & 99.18$\pm$0.04 & 99.26$\pm$0.03        \\ \midrule 
NN & FC1, KIP\tnote{2}   &  86.49$\pm$0.40  &  88.96$\pm$0.37  & 95.70$\pm$0.09   & 97.97$\pm$0.07  & - &         \\
NN & ConvNet\tnote{3}, DC\tnote{4} & \textbf{91.7$\pm$0.5}   &    \textbf{97.4$\pm$0.2}      &  -   & -  & -      \\
NN & LeNet, DC &  -  &    93.9$\pm$0.6     &  -   &  - & -      \\
NN & LeNet, SLDD               &    -     &     82.7$\pm$2.8     &   -   & - & -     \\ 
NN & LeNet, DD                  &   -   &      79.5$\pm$8.1      &    -  & - & -     \\ \midrule
KRR & FC1, LS       &  61.0$\pm$0.28  & 87.2$\pm$0.71   &  94.4$\pm$0.16  &   97.5$\pm$0.06 &
97.9$\pm$0.09
\\
KRR & Myrtle-5 NNGP, LS &  \textbf{70.24$\pm$1.59}  &    \textbf{95.44$\pm$0.17} &  \textbf{98.32$\pm$0.91}  &    \textbf{99.17$\pm$0.01} & 
\textbf{99.33$\pm$0.07}
\\
KRR & Myrtle-5, LS & 68.50$\pm$2.52 &
95.53$\pm$0.22 &
98.17$\pm$0.07 &
99.05$\pm$0.06 &
99.22$\pm$0.02 \\
NN & LeNet, LD    &  64.57$\pm$2.67  &    87.85$\pm$0.43    &      94.75$\pm$0.29   &  -  & -\\
\bottomrule
\end{tabular}
\footnotesize
\begin{tablenotes}
\item[1] (a + l) denotes \KIP trained with augmentations and learning of labels
\item[2] \KIP images are trained using the same kernel (FC1) corresponding to the evaluation neural network. Likewise for KRR, the train and test kernels coincide.
\item[3] ConvNet is neural network consisting of 3 convolutional blocks, where a block consists of convolution, instance normalization, and a (2,2) average pooling. See \cite{zhao2020dataset}.
\item[4] DC~\citep{zhao2020dataset}, LD~\citep{bohdal2020flexible}, SLDD~\citep{sucholutsky2019softlabel}, DD~\citep{wang2018dataset}.
\end{tablenotes}
\end{threeparttable}
\end{table}

\begin{table}[ht!]
\centering
\begin{threeparttable}
\caption{\label{table:CIFAR10baselines}\textbf{CIFAR-10: KIP and LS vs baselines.} Comparing KRR (kernel ridge-regression) and NN (neural network) algorithms using various architectures and dataset distillation methods on datasets of various sizes (10 to 10K). Notation same as in Table \ref{table:MNISTbaselines}.
}
\begin{tabular}{ll|cccccccc} \toprule
\textbf{Alg.} & \textbf{Arch., Method}      & 10 & 100 & 500 & 5000 & 10000 \\ \midrule
KRR & RBF, KIP & 39.9$\pm$0.9 & 49.3$\pm$0.3 &  51.2$\pm$0.8 & - & -\\
KRR & RBF, KIP (a + l) & 40.3$\pm$0.5 & \textbf{53.8$\pm$0.3} & \textbf{60.1$\pm$0.2} & \textbf{65.6$\pm$0.2} & \textbf{66.3$\pm$0.2}\\
KRR & FC1, KIP          & 39.3$\pm$1.6 & 49.1$\pm$1.1 & 52.1$\pm$0.8 & 54.5$\pm$0.5 & 54.9$\pm$0.5      \\
KRR & FC1, KIP (a + l)  & \textbf{40.5$\pm$0.4} & 53.1$\pm$0.5 & 58.6$\pm$0.4 & 63.8$\pm$0.3 & 64.6$\pm$0.2         \\ \midrule
NN & FC1, KIP   & \textbf{36.2$\pm$0.1}    & \textbf{45.7$\pm$0.3}    & 46.9$\pm$0.2    & 50.1$\pm$0.4    & 51.7$\pm$0.4         \\
NN & ConvNet, DC   & 28.3$\pm$0.5   &  44.9$\pm$0.5   & - & - & -     \\
NN & AlexNet, DC &  -  &   39.1$\pm$1.2  & - & - & -  \\
NN & AlexNet, SLDD                        &  -  &  39.8$\pm$0.8 & - & - & - \\
NN & AlexNet, DD                        &   - &  36.8$\pm$1.2 & - & - & -  \\ \midrule
KRR & FC1 NNGP, LS      &  27.5$\pm$0.3  &    40.1$\pm$0.3    &  46.4$\pm$0.4 & 53.5$\pm$0.2    & 55.1$\pm$0.3       \\
KRR & Myrtle-10 NNGP, LS + ZCA\tnote{5}      &   \textbf{31.7$\pm$0.2} &   \textbf{56.0$\pm$0.5} &   \textbf{69.8$\pm$0.1} &  \textbf{80.2$\pm$0.1}  & \textbf{82.3$\pm$0.1}        \\

KRR & Myrtle-10, LS & 28.8$\pm$0.4 &
45.8$\pm$0.7 &
58.0$\pm$0.3 &
69.6$\pm$0.2 &
72.0$\pm$0.2 \\
NN & AlexNet, LD   & 25.69$\pm$0.72   &  38.33$\pm$0.44    & 43.16$\pm$0.47   &  -    & -
\\ \bottomrule
\end{tabular}
\footnotesize
\begin{tablenotes}
\item[5] We apply regularized ZCA whitening instead of standard preprocessing to the images, see Appendix \ref{sec:ExpDetails} for further details.
\end{tablenotes}
\end{threeparttable}
\end{table}

\subsection{Kernel to Kernel Results}

Here we investigate robustness of \KIP and \LS learned datasets when there is variation in the kernels used for training and testing. We draw kernels coming from FC and Conv layers of depths 1-3, since such components form the basic building blocks of neural networks. Figure \ref{fig:kernel-transfer-cifar} shows that \KIP-datasets trained with random sampling of all six kernels do better on average than \KIP-datasets trained using individual kernels.

For \LS, transferability between FC1 and Myrtle-10 kernels on CIFAR-10 is highly robust, see Figure \ref{fig:ls_kernel_transfer}. Namely, one can label solve using FC1 and train Myrtle-10 using those labels and vice versa. There is only a negligible difference in performance in nearly all instances between data with transferred learned labels and with natural labels.

\subsection{Kernel to Neural Networks Results}\label{sec:KIPtransferNN}

Significantly, \algname-learned datasets, even with heavy corruption, transfer remarkably well to the training of neural networks. Here, corruption refers to setting a random $\rho$ fraction of the pixels of each image to uniform noise between $-1$ and $1$ (for \KIP, this is implemented via \algnamerho\,)\footnote{Our images are preprocessed so as to be mean-centered and unit-variance per pixel. This choice of corruption, which occurs post-processing, is therefore meant to (approximately) match the natural pixel distribution.}. The deterioriation in test accuracy for \KIP-images is limited as a function of the corruption fraction, especially when compared to natural images, and moreover, corrupted \KIP-images typically outperform \textit{uncorrupted} natural images. We verify these conclusions along the following dimensions:

\textbf{Robustness to dataset size:} We perform two sets of experiments. 

(i) First, we consider small \KIP datasets (10, 100, 200 images) optimized using multiple kernels (FC1-3, Conv1-2), see Tables \ref{table:KIPtoNNMNIST}, \ref{table:KIPtoNNCIFAR10}. We find that our in-distribution transfer (the downstream neural network has its neural kernel included among the kernels sampled by \algname) performs remarkably well, with both uncorrupted \textit{and} corrupted \algname images beating the uncorrupted natural images of corresponding size. Out of distribution networks (LeNet~\citep{lecun1998gradient} and Wide Resnet~\citep{zagoruyko2016wide}) have less transferability: the uncorrupted images still outperform natural images, and  corrupted \KIP images still outperform corrupted natural images, but corrupted \KIP images no longer outperform uncorrupted natural images.

(ii) We consider larger \KIP datasets (1K, 5K, 10K images) optimized using a single FC1 kernel for training of a corresponding FC1 neural network, where the \KIP training uses augmentations (with and without label learning), see Tables \ref{table:MNIST_corruption_sweep_mse}-\ref{table:CIFAR10_corruption_sweep_xent} and Figure \ref{fig:kip_vs_natural_2x2}. We find, as before, \KIP images outperform natural images by an impressive margin: for instance, on CIFAR-10, 10K \KIP-learned images with 90\% corruption achieves 49.9\% test accuracy, exceeding 10K natural images with no corruption (acc: 45.5\%) and 90\% corruption (acc: 33.8\%). Interestingly enough, sometimes higher corruption leads to \textit{better} test performance (this occurs for CIFAR-10 with cross entropy loss for both natural and \KIP-learned images), a phenomenon to be explored in future work. We also find that \KIP with label-learning often tends to harm performance, perhaps because the labels are overfitting to KRR. 

\textbf{Robustness to hyperparameters:} For CIFAR-10, we took 100 images, both clean and 90\% corrupted, and trained networks on a wide variety of hyperparameters for various neural architectures. We considered both neural networks whose corresponding neural kernels were sampled during $\KIP$-training those that were not. We found that in both cases, the \KIP-learned images almost always outperform 100 random natural images, with the optimal set of hyperparameters yielding a margin close to that predicted from the KRR setting, see Figure \ref{fig:cifar-nn}. This suggests that \KIP-learned images can be useful in accelerating hyperparameter search.

\begin{figure}[ht!]
\centering
\includegraphics[width=0.75\columnwidth]{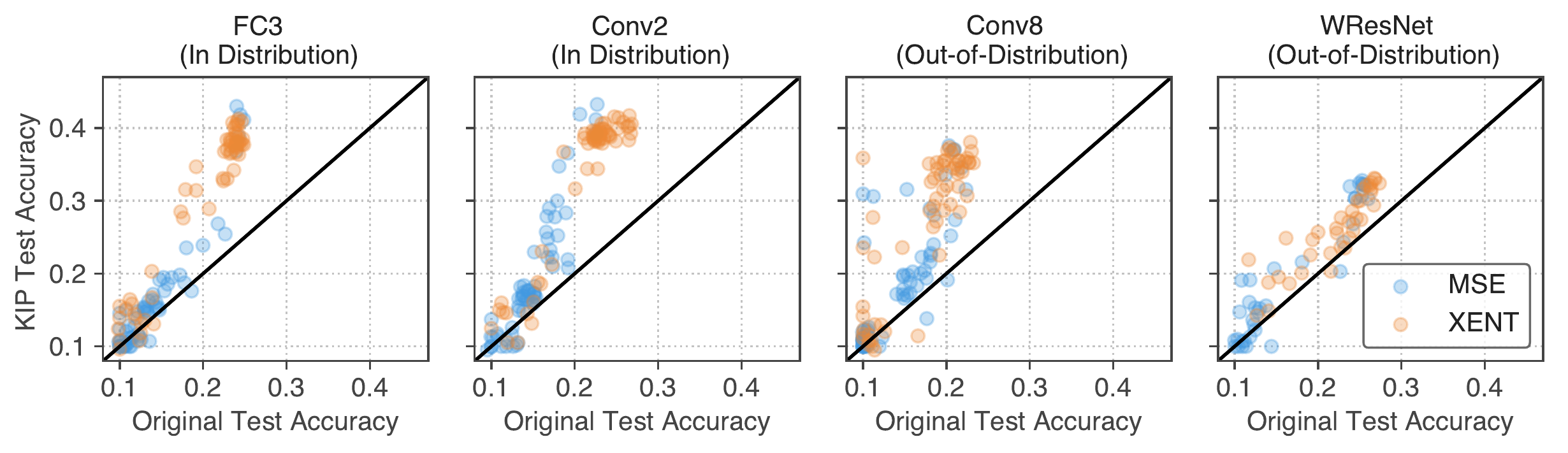}
\includegraphics[width=0.75\columnwidth]{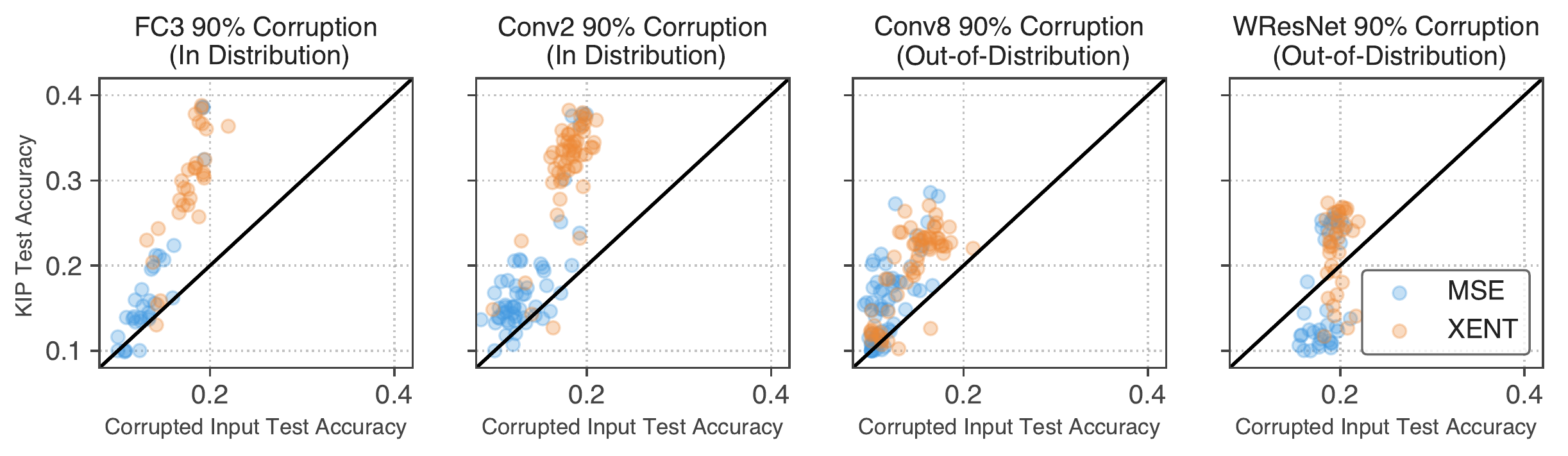}
\caption{\textbf{KIP learned images transfers well to finite neural networks.} Test accuracy on CIFAR-10 comparing natural images (x-axis) and \KIP-learned images (y-axis).  Each scatter point corresponds to varying hyperparameters for training (e.g. learning rate). Top row are clean images, bottom row are 90\% corrupted images. \KIP images were trained using FC1-3, Conv1-2 kernels.
}
\label{fig:cifar-nn}
\end{figure}

\section{Related Work}

\textbf{Coresets:}
A classical approach for compressing datasets is via subset selection, or some approximation thereof. One notable work is \cite{borsos2020coresets}, utilizing KRR for dataset subselection. For an overview of notions of coresets based on pointwise approximatation of datasets, see \citet{phillips2016coresets}.   

\textbf{Neural network approaches to dataset distillation:}
\cite{maclaurin2015gradient, lorraine2019optimizing} approach dataset distillation through learning the input images from large-scale gradient-based meta-learning of hyper-parameters. 
 Properties of distilled input data was first analyzed in \cite{wang2018dataset}. The works \cite{sucholutsky2019softlabel, bohdal2020flexible} build upon \cite{wang2018dataset} by distilling labels. 
 More recently, \cite{zhao2020dataset} proposes condensing training set by gradient matching condition and shows improvement over~\citet{wang2018dataset}.

\textbf{Inducing points:}
Our approach has as antecedant the inducing point method for Gaussian Processes~\citep{snelson2006sparse, titsias2009variational}. However, whereas the latter requires a probabilistic framework that optimizes for marginal likelihood, in our method we only need to consider minimizing mean-square loss on validation data.


\textbf{Low-rank kernel approximations:} Unlike common low-rank approximation methods  \citep{williams2001using, drineas2005nystrom}, we obtain not only a low-rank support-support kernel matrix with \algname, but also a low-rank target-support kernel matrix. Note that the resulting matrices obtained from \algname need not approximate the original support-support or target-support matrices since \KIP only optimizes for the loss function. 

\textbf{Neural network kernels:} Our work is motivated by the exact correspondence between infinitely-wide neural networks and kernel methods~\citep{neal, lee2018deep, matthews2018, Jacot2018ntk, novak2018bayesian, garriga2018deep, arora2019on}. These correspondences allow us to view both Bayesian inference and gradient descent training of wide neural networks with squared loss as yielding a Gaussian process or kernel ridge regression with neural kernels. 


\textbf{Instance-Based Encryption: } A related approach to corrupting datasets involves encrypting individual images via sign corruption (\cite{huang2020instahide}).

\section{Conclusion}\label{sec:Discussion}

We introduced novel algorithms \KIP and \LS for the meta-learning of datasets. We obtained a variety of compressed and corrupted datasets, achieving state of the art results for KRR and neural network dataset distillation methods. This was achieved even using the simplest of kernels and neural networks (shallow fully-connected networks and purely-convolutional networks without pooling), which notwithstanding their limited expressiveness, outperform most baselines that use deeper architectures. Follow-up work will involve scaling up \KIP to deeper architectures with pooling (achievable with multi-device training) for which we expect to obtain even more highly performant datasets, both in terms of overall accuracy and architectural flexibility. Finally, we obtained highly corrupt datasets whose performance match or exceed natural images, which when developed at scale, could lead to practical applications for privacy-preserving machine learning.

\subsubsection*{Acknowledgments}
We would like to thank Dumitru Erhan, Yang Li, Hossein Mobahi, Jeffrey Pennington, Si Si, Jascha Sohl-Dickstein, and Lechao Xiao for helpful discussions and references.

\bibliography{references}
\bibliographystyle{iclr2021_conference}

\newpage

\appendix

\setcounter{equation}{0}
\setcounter{figure}{0}
\setcounter{table}{0}
\setcounter{page}{1}
\setcounter{section}{0}

\renewcommand{\theequation}{A\arabic{equation}}
\renewcommand{\thefigure}{A\arabic{figure}}
\renewcommand{\thetable}{A\arabic{table}}

\section{Remarks on Definition of $\eps$-approximation}

Here, we provide insights into the formulation of Definition \ref{def:approximation}. 
One noticeable feature of our definition is that it allows for different algorithms $A$ and $\tilde A$ when comparing datasets $D$ and $\tilde D$. On the one hand, such flexibility is required, since for instance, a mere preprocessing of the dataset (e.g. rescaling it), should be regarded as producing an equivalent ($0$-approximate) dataset. Yet such a rescaling may affect the  hyperparameters needed to train an equivalent model (e.g. the learning rate). Thus, one must allow the relevant hyperparameters of an algorithm to vary when the datasets are also varying. On the other hand, it would be impossible to compare two datasets meaningfully if the learned algorithms used to train them differ too significantly. For instance, if $D$ is a much larger dataset than $\tilde{D}$, but $A$ is a much less expressive algorithm than $\tilde{A}$, then the two datasets may be $\eps$-approximations of each other, but it would be strange to compare $D$ and $\tilde{D}$ in this way. Thus, we treat the notion of what class of algorithms to consider informally, and leave its specification as a practical matter for each use case. In practice, the pair of algorithms we use to compare datasets should be drawn from a family in which some reasonable range of hyperparameters are varied, the ones typically tuned when learning on an unknown dataset. The main case for us with differing $A$ and $\tilde A$ is when we compare neural network training alongside kernel ridge-regression.

Another key feature of our definition is that datapoints of an $\eps$-approximating dataset must have the same shape as those of the original dataset. This makes our notion of an $\eps$-approximate dataset more restrictive than returning a specialized set of extracted features from some initial dataset.

Analogues of our $\eps$-approximation definition have been formulated in the unsupervised setting, e.g. in the setting of clustering data \citep{phillips2016coresets,jubran2019introduction}.

Finally, note that the loss function $\ell$ used for comparing datasets does not have to coincide with any loss functions optimized in the learning algorithms $A$ and $\tilde A$. Indeed, for kernel ridge-regression, training mimimizes mean square loss while $\ell$ can be $0$-$1$ loss.

\section{Tuning \KIP}\label{section:tuning}

\textbf{Sampling: }
When optimizing for KRR performance with support dataset size $N$, it is best to learn a support set $\tilde{D}$ of size $N$ and sample this entire set during \KIP training. It is our observation that subsets of size $M < N$ of $\tilde{D}$ will not perform as well as optimizing directly for a size $M$ dataset through \KIP. Conversely, sampling subsets of size $M$ from a support dataset of size $N$ during \KIP will not lead to a dataset that does as well as optimizing for all $N$ points. This is sensible: optimizing for small support size requires a resourceful learning of coarse features at the cost of learning fine-grained features from many support datapoints. Conversely, optimizing a large support set means the learned support set has leveraged higher-order information, which will degrade when restricted to smaller subsets.

For sampling from the target set, which we always do in a class-balanced way, we found larger batch sizes typically perform better on the test set if the train and test kernels agree. If the train and test kernels differ, then smaller batch sizes lead to less overfitting to the train kernel.

\textbf{Initialization:}
We tried two sets of initializations. The first (``image init") initializes $(X_s,y_s)$ to be a subset of $(X_t, y_t)$. The second (``noise init") initializes $X_s$ with uniform noise and $y_s$ with mean-centered, one-hot labels (in a class-balanced way). We found image initialization to perform better.

\textbf{Regularization:} The regularization parameter  $\lambda$ in (\ref{eq:loss}) can be replaced with $\frac{1}{n}\lambda\cdot\mr{tr}(K_{X_sX_s})$, where $n$ is the number of datapoints in $X_s$. This makes the loss function invariant with respect to rescaling of the kernel function $K$ and also normalizes the regularization with respect to support size. In practice, we use this scale-invariant regularization with $\lambda = 10^{-6}$.

\textbf{Number of Training Iterations:} Remarkably, \algname converges very quickly in all experimental settings we tried. After only on the order of a hundred iterations, independently of the support size, kernel, and corruption factor, the learned support set has already undergone the majority of its learning (test accuracy is within more than 90\% of the final test accuracy). For the platforms available to us, using a single V100 GPU, one hundred training steps for the experiments we ran involving target batch sizes that were a few thousand takes on the order of about 10 minutes. When we add augmentations to our targets, performance continues to improve slowly over time before flattening out after several thousands of iterations.

\section{Theoretical Results}

Here, we analyze convergence properties of \algname in returning an $\eps$-approximate dataset. In what follows, we refer to gradient-descent \algname as the case when we sample from the entire support and train datasets for each update step to \algname. We also assume that the distribution $\mc{P}$ used to evaluate $\eps$-approximation is supported on inputs $x \in \R^d$ with $\|x\| \leq 1$ (merely to provide a convenient normalization when evaluating loss on regression algorithms).

For the case of a linear kernel, we prove the below convergence theorem:

\begin{theorem}\label{MainThm}
Let $D = (X_t, y_t) \in \R^{n_t \times d} \times \R^{n_t \times C}$ be an arbitrary dataset. Let $w_\lambda \in \R^{d \times C}$ be the coefficients obtained from training $\lambda$ ridge-regression ($\lambda$-RR) on $(X_t, y_t)$, as given by (\ref{eq:wRR}). 
\begin{enumerate}
    \item For generic\footnote{A set can be generic by either being open and dense or else having probability one with respect to some measure absolutely continuous with respect to Lebesgue measure. In our particular case, generic refers to the complement of a submanifold of codimension at least one.} initial conditions for the support set $(X_s, y_s) \subset \R^{n_s \times d} \times \R^{n_s \times C}$ and sufficiently small $\lambda > 0$, gradient descent \KIP with target dataset $D$ converges to a dataset $\tilde D$. 
    \item The dataset $\tilde{D}$ is a strong $\eps$-approximation to $D$ with respect to algorithms ($\lambda$-RR, $0$-RR) and loss function equal to mean-square loss, where
\begin{equation}
    \eps \leq \frac{1}{2}\|\tilde w - w_0\|^2_2 \label{eq:KIPepsilon}
\end{equation}
and $\tilde w \in \R^{d \times C}$ are the coefficients of the linear classifier obtained from training $\lambda$-RR on $\tilde D$. If the size of $\tilde D$ is at least $C$, then $\tilde w$ is also a least squares classifier for $D$. In particular, if $D$ has a unique least squares classifier, then $\eps = 0$.
\end{enumerate}
\end{theorem}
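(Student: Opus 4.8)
For a linear kernel $K(u,v)=u^{\top}v$ one has $K_{X_tX_s}=X_tX_s^{\top}$ and $K_{X_sX_s}=X_sX_s^{\top}$, so the \KIP loss (\ref{eq:loss}) becomes $L(X_s,y_s)=\tfrac12\|y_t-X_tw_s\|^2$, where $w_s=\Phi_\lambda(X_s)y_s\in\R^{d\times C}$ is precisely the coefficient matrix of the KRR predictor $x\mapsto x^{\top}w_s$ (cf.\ (\ref{eq:wRR})--(\ref{eq:Phi_lambda})); i.e.\ $L$ is the mean-square loss on $D$ of that linear predictor. Hence $\tilde A_{\tilde D}$ is $x\mapsto x^{\top}\tilde w$ and $A_D$ is $x\mapsto x^{\top}w_0$, and since $\mc P$ is supported on $\{x:\|x\|\le 1\}$,
\[
\E_{x\sim\mc P}\!\left[\tfrac12\|x^{\top}\tilde w-x^{\top}w_0\|^2\right]\le \tfrac12\,\E_{x\sim\mc P}[\|x\|^2]\,\|\tilde w-w_0\|_2^2\le \tfrac12\|\tilde w-w_0\|_2^2 ,
\]
which is exactly (\ref{eq:KIPepsilon}); this step uses nothing but Cauchy--Schwarz and requires only that \KIP return \emph{some} $\tilde D=(\tilde X_s,\tilde y_s)$ with $\tilde w=\Phi_\lambda(\tilde X_s)\tilde y_s$. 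Moreover $\nabla_w\big(\tfrac12\|y_t-X_tw\|^2\big)=X_t^{\top}(X_tw-y_t)$ vanishes exactly when $w$ is a least-squares solution for $D$, i.e.\ when $L$ attains its global minimum $r^{\ast}:=\min_w\tfrac12\|y_t-X_tw\|^2$. I would then check this minimum is attainable by \KIP as soon as $n_s=|\tilde D|\ge C$: pick $X_s$ of rank $\min(n_s,d)$ whose row space $V$ contains the (at most $\min(C,d)\le\min(n_s,d)$-dimensional) column space of $w_0$; then $\mathrm{range}\,\Phi_\lambda(X_s)=\mathrm{range}\,X_s^{\top}=V$, so some $y_s$ gives $w_s=w_0$ and $L=r^{\ast}$. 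Granting that \KIP converges to a \emph{global} minimizer (next paragraph), $L(\tilde D)=r^{\ast}$ forces $\tilde w$ to be a least-squares classifier for $D$; if that classifier is unique then $\tilde w=w_0$ and $\eps\le\tfrac12\|w_0-w_0\|_2^2=0$.

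\textbf{Convergence of gradient-descent \KIP.}
For $\lambda>0$ the matrix $X_sX_s^{\top}+\lambda I$ is everywhere invertible, so $L$ is real-analytic on all of $\R^{n_s\times d}\times\R^{n_s\times C}$, bounded below, and nonincreasing along the gradient flow. The plan has two parts. (i) Show the trajectory stays in a compact set, and then apply the {\L}ojasiewicz gradient inequality for analytic functions to upgrade this to convergence of the flow to a single critical point $\tilde D$. (ii) Show that for generic initial conditions $\tilde D$ is a global minimum. For (ii) the reparametrization does most of the work: if $(X_s,y_s)$ is a critical point of $L$ with $\mathrm{rank}\,y_s=C$, then the differential of $(X_s,y_s)\mapsto w_s=X_s^{\top}(X_sX_s^{\top}+\lambda I)^{-1}y_s$ is surjective onto $\R^{d\times C}$ — already the summand $\delta X_s\mapsto(\delta X_s)^{\top}(X_sX_s^{\top}+\lambda I)^{-1}y_s$ is onto when $(X_sX_s^{\top}+\lambda I)^{-1}y_s$ has full column rank $C$ — so $\nabla_{(X_s,y_s)}L=0$ implies $X_t^{\top}(X_tw_s-y_t)=0$, a global minimum. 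Therefore every non-global critical point lies in the proper real-analytic subvariety $W=\{\,(X_s,y_s):\mathrm{rank}\,y_s<C\,\}$. One then argues each such non-global critical point is a strict saddle (a perturbation transverse to $W$ that raises $\mathrm{rank}\,y_s$ enlarges the reachable set of $w_s$-directions and produces a descent direction), so by the stable-manifold theorem the set of initializations whose trajectory converges into $W$ has measure zero; hence generically $\tilde D$ is a global minimizer, and the previous paragraph concludes Part 2.

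\textbf{Where the difficulty lies.}
The genuinely delicate points are exactly (i) and (ii) above. For (i), $L$ is not coercive in $(X_s,y_s)$ — it sees these only through $w_s$ — so one must either impose a mild nondegeneracy hypothesis (e.g.\ $X_t$ of full column rank, which incidentally makes the least-squares solution of $D$ unique) or show that the flow cannot drift unboundedly within a fiber of $(X_s,y_s)\mapsto w_s$. For (ii), the hard work is the full classification of critical points on $W$: ruling out \emph{spurious local minima} and handling non-strict saddles, and it is here that ``$\lambda$ sufficiently small'' and the genericity of the initialization are actually consumed (e.g.\ to pin down the Hessian signature transverse to $W$). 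Everything else is routine bookkeeping with pseudoinverses and the triangle/Cauchy--Schwarz inequalities.
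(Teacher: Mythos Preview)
Your Part 2 argument is essentially the paper's: both factor the loss through $w_s=\Phi_\lambda(X_s)y_s$, bound the strong-$\eps$ by $\tfrac12\|\tilde w-w_0\|_2^2$ via $\|x\|\le 1$, and observe that for $n_s\ge C$ and generic (full-rank) $y_s$ the map $X_s\mapsto w_s$ hits every least-squares solution.

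Your Part 1 route is genuinely different from the paper's, and it has a real gap. The paper does not use {\L}ojasiewicz or a strict-saddle classification. It writes $L$ (for the $X_s$-only flow, the default \KIP) as the pullback under $X_s\mapsto\Phi_\lambda(X_s)$ and analyzes $\Phi_\lambda$ through the SVD: if $X_s=U\Sigma V^\top$ then $\Phi_\lambda(X_s)=V\phi(\Sigma)U^\top$ with $\phi(\mu)=\mu/(\mu^2+\lambda)$. Since $\phi$ is a local diffeomorphism except at its maximum $\mu^\ast=\sqrt\lambda$, the map $\Phi_\lambda$ is a local submersion away from the locus $\{X_s:\sigma_i(X_s)=\sqrt\lambda\text{ for some }i\}$, so $L$ has no non-global local minima off that locus. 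The hypothesis ``$\lambda$ sufficiently small'' is consumed exactly here: $\phi(\mu^\ast)=1/(2\sqrt\lambda)\to\infty$, whereas $\|\Phi_\lambda(X_s(t))\|$ stays bounded along the trajectory, so for small $\lambda$ the flow never meets the degeneracy locus. Convergence (ruling out escape to infinity) is then handled by compactifying each singular value of $X_s$ at $+\infty$, extending $\Phi_\lambda$ there, and showing the set of initial data whose trajectory lands at infinity has codimension at least one.

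The gap in your argument is the surjectivity step. You claim the differential of $(X_s,y_s)\mapsto w_s$ is onto because ``already the summand $\delta X_s\mapsto(\delta X_s)^\top(X_sX_s^\top+\lambda I)^{-1}y_s$ is onto''. That map is not $D_{X_s}w_s$; it is only one of three terms, and the remaining terms can cancel it. Concretely, for $n_s=d=C=1$ one has $w_s=xy/(x^2+\lambda)$ and $\partial_x w_s=y(\lambda-x^2)/(x^2+\lambda)^2$, which vanishes at $x=\pm\sqrt\lambda$ even though $y\neq 0$ --- this is precisely the locus the paper isolates. Hence for the $X_s$-only flow your surjectivity claim is false without first using small $\lambda$ to keep the trajectory away from that locus; this, not a Hessian signature on $W=\{\mathrm{rank}\,y_s<C\}$, is where the hypothesis actually lives. (For the joint flow the conclusion can be salvaged by combining $D_{y_s}w_s$, whose range is $(\mathrm{col}\,X_s^\top)^C$, with the projection of $D_{X_s}w_s$ onto the complementary subspace --- but that is not the argument you gave.) The boundedness issue you flag is likewise not optional: the paper's singular-value compactification is what replaces the missing coercivity, and without something playing that role the {\L}ojasiewicz step does not get off the ground.
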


\begin{proof} We discuss the case where $X_s$ is optimized, with the case where both $(X_s, y_s)$ are optimized proceeding similarly. In this case, by genericity, we can assume $y_s \neq 0$, else the learning dynamics is trivial. Furthermore, to simplify notation for the time being, assume the dimensionality of the label space is $C = 1$ without loss of generality. First, we establish convergence. For a linear kernel, we can write our loss function as
\begin{equation}
L(X_s) = \frac{1}{2}\|y_t - X_tX_s^T(X_sX_s^T+\lambda I)^{-1}y_s\|^2, \label{eq:linearloss}
\end{equation}
defined on the space $\mathbb{M}_{n_s \times d}$ of $n_s \times d$ matrices. It is the pullback of the loss function
\begin{alignat}{2}
    L_{\R^{d \times n_s}}(\Phi) = \frac{1}{2}\|y_t - X_t \Phi y_s\|^2, & \qquad \Phi & \in \R^{d \times n_s} \label{eq:functionalPhi}
\end{alignat}
under the map $X_s \mapsto \Phi_\lambda(X_s) = X_s^T(X_sX_s^T+\lambda I)^{-1}.$ The function (\ref{eq:functionalPhi}) is quadratic in $\Phi$ and all its local minima are global minima given by an affine subspace $\mc{M} \subset \mathbb{M}_{d \times n_s}$. Moreover, each point of $\mc{M}$ has a stable manifold of maximal dimension equal to the codimension of $\mc{M}$. Thus, the functional $L$ has global minima given by the inverse image $\Phi^{-1}_\lambda(\mc{M})$ (which will be nonempty for sufficiently small $\lambda$). 

Next, we claim that given a fixed initial $(X_s, y_s)$, then for sufficiently small $\lambda$, gradient-flow of (\ref{eq:linearloss}) starting from $(X_s,y_s)$ cannot converge to a non-global local minima. We proceed as follows. If $X = U\Sigma V^T$ is a singular value decomposition of $X$, with $\Sigma$ a $n_s \times n_s$ diagional matrix of singular values (and any additional zeros for padding), then $\Phi(X) = V\phi(\Sigma)U^T$
where $\phi(\Sigma)$ denotes the diagonal matrix with the map 
\begin{align}
\phi: \R^{\geq 0} & \to \R^{\geq 0} \\
\phi(\mu) &= \frac{\mu}{\mu^2 + \lambda}
\end{align}
applied to each singular value of $\Sigma$. The singular value decomposition depends analytically on $X$ (\cite{kato}). Given that $\phi: \R^{\geq 0} \to \R^{\geq 0}$ is a local diffeomorphism away from its maximum value at $\mu = \mu^* := \lambda^{1/2}$, it follows that $\Phi_\lambda: \mathbb{M}_{n_s \times d} \to \mathbb{M}_{d \times n_s}$ is locally surjective, i.e. for every $X$, there exists a neighborhood $\mc{U}$ of $X$ such that $\Phi_\lambda(\mc{U})$ contains a neighborhood of $\Phi_\lambda(X)$. Thus, away from the locus of matrices in $\mathbb{M}_{n_s \times d}$ that have a singular value equaling $\mu^*$, the function (\ref{eq:linearloss}) cannot have any non-global local minima, since the same would have to be true for (\ref{eq:functionalPhi}). We are left to consider those matrices with some singular values equaling $\mu^*$. Note that as $\lambda \to 0$, we have $\phi(\mu^*) \to \infty$. On the other hand, for any initial choice of $X_s$, the matrices $\Phi_\lambda(X_s)$ have uniformly bounded singular values as a function of $\lambda$. Moreover, as $X_s = X_s(t)$ evolves, $\|\Phi_\lambda(X_s(t))\|$ never needs to be larger than some large constant times $\|\Phi_\lambda(X_s(0))\| + \frac{\|y_t\|}{\mu^+\|y_s\|}$, where $\mu_+$ is the smallest positive singular value of $X_t$. Consequently, $X_s(t)$ never visits a matrix with singular value $\mu^*$ for sufficiently small $\lambda > 0$; in particular, we never have to worry about convergence to a non-global local minimum.

Thus, a generic gradient trajectory $\gm$ of $L$ will be such that $\Phi_\lambda \circ \gamma$ is a gradient-like\footnote{A vector field $v$ is gradient-like for a function $f$ if $v \cdot \mr{grad}(f) \geq 0$ everywhere.} trajectory for $L_{\R^{d \times n_s}}$ that converges to $\mc{M}$. We have to show that $\gm$ itself converges. It is convenient to extend $\phi$ to a map defined on the one-point compactification $[0, \infty] \supset \R^{\geq 0}$, so as to make $\phi$ a two-to-one map away from $\mu^*$. Applying this compactification to every singular value, we obtain a compactification $\overline{\mathbb{M}}^{n_s \times d}$ of $\mathbb{M}^{n_s \times d}$, and we can naturally extend $\Phi_\lambda$ to such a compactification. We have that $\gm$ converges to an element of $\tilde{\mc{M}} := \Phi_\lambda^{-1}(\mc{M}) \subset \overline{\mathbb{M}}^{n_s \times d}$, where we need the compactification to account for the fact that when $\Phi_\lambda \circ \gm$ converges to a matrix that has a zero singular value, $\gm$ may have one of its singular values growing to infinity. Let $\mc{M}_0$ denote the subset of $\mc{M}$ with a zero singular value.  Then $\gm$ converges to an element of $\mathbb{M}^{n_s \times d}$ precisely when $\gm$ does not converge to an element of $\tilde{\mc{M}}_\infty := \Phi^{-1}_\lambda(\mc{M}_0) \cap ( \overline{\mathbb{M}}^{n_s \times d} \setminus \mathbb{M}^{n_s \times d})$. However, $\mc{M}_0 \subset \mc{M}$ has codimension one and hence so does $\tilde{\mc{M}}_\infty \subset \Phi^{-1}_\lambda(\mc{M}_0)$. Thus, the stable set to $\tilde{\mc{M}}_\infty$ has codimension one in $\overline{\mathbb{M}}^{n_s \times d}$, and hence its complement is nongeneric. Hence, we have generic convergence of a gradient trajectory of $L$ to a (finite) solution. This establishes the convergence result of Part 1. 

For Part 2, the first statement is a general one: the difference of any two linear models, when evaluated on $\mc{P}$, can be pointwise bounded by the spectral norm of the difference of the model coefficient matrices. Thus $\tilde{D}$ is a strong $\eps$-approximation to $D$ with respect to ($\lambda$-RR, $0$-RR) where $\eps$ is given by (\ref{eq:KIPepsilon}). For the second statement, observe that $L$ is also the pullback of the loss function 
\begin{alignat}{2}
L_{\R^{d \times C}}(w) = \frac{1}{2}\|y_t - X_t w\|^2, & \qquad w & \in \R^{d \times C}.  \label{eq:functionalw}
\end{alignat}
under the map $X_s \mapsto w(X_s) = \Phi_\lambda(X_s)y_s$. The function $L_{\R^{d \times C}}(w)$ is quadratic in $w$ and has a unique minimum value, with the space of global minima being an affine subspace $W^*$ of $\R^d$ given by the least squares classifiers for the dataset $(X_t, y_t)$. Thus, the global minima of $L$ are the preimage of $W^*$ under the map $w(X_s)$. For generic initial $(X_s,y_s)$, we have $y_s \in \R^{n_s \times C}$ is full rank. This implies, for $n_s \geq C$, that the range of all possible $w(X_s)$ for varying $X_s$ is all of $\R^{d \times C}$, so that the minima of (\ref{eq:functionalw}) and (\ref{eq:functionalPhi}) coincide. This implies the final parts of Part 2.
\end{proof}

We also have the following result about $\eps$-approximation using the label solve algorithm:

\begin{theorem}
Let $D = (X_t, y_t) \in \R^{n_t \times d} \times \R^{n_t \times C}$ be an arbitrary dataset. Let $w_\lambda \in \R^{d \times C}$ be the coefficients obtained from training $\lambda$ ridge-regression ($\lambda$-RR) on $(X_t, y_t)$, as given by (\ref{eq:wRR}). Let $X_s \in \R^{n_s \times d}$ be an arbitrary initial support set and let $\lambda \geq 0$. Define $y_s^* = y_s^*(\lambda)$ via (\ref{eq:label_solve}).

Then $(X_s, y_s^*)$ yields a strong $\eps(\lambda)$-approximation of $(X_t, y_t)$ with respect to algorithms ($\lambda$-RR, $0$-RR) and mean-square loss, where
\begin{equation}
    \eps(\lambda) = \frac{1}{2}\|w^*(\lambda) - w_0\|^2_2 \label{eq:eps(lambda)}
\end{equation}
and $w^*(\lambda)$ is the solution to
\begin{equation}
w^*(\lambda) = \mathrm{argmin}_{w \in W}\|y_t - X_t w\|^2, \qquad W = \mathrm{im}\bigg(\Phi_\lambda(X_s): \ker\Big(X_t\Phi_\lambda(X_s)\Big)^\bot \to \R^{d \times C}\bigg). \label{eq:argminw*}
\end{equation}
Moreover, for $\lambda = 0$, if $\mr{rank}(X_s) = \mr{rank}(X_t) = d$, then $w^*(\lambda) = w_0$. This implies $y_s^* = X_sw_0$, i.e. $y_s^*$ coincides with the predictions of the $0$-RR classifier trained on $(X_t,y_t)$ evaluated on $X_s$.
\end{theorem}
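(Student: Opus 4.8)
The plan is to follow the structure of the proof of Theorem~\ref{MainThm}, Part 2: first reduce the $\eps$-approximation claim to a statement about the coefficient matrices of the two linear predictors involved, and then identify those coefficients by pseudoinverse algebra. Throughout I will write $M := \Phi_\lambda(X_s) \in \R^{d\times n_s}$, so that for the linear kernel $K_{X_tX_s}(K_{X_sX_s}+\lambda I)^{-1} = X_t M$ and the label-solve formula (\ref{eq:label_solve}) becomes $y_s^* = \Phi_0(X_tM)\,y_t$; consequently the $\lambda$-RR classifier trained on $(X_s, y_s^*)$ has coefficient matrix $\tilde w := \Phi_\lambda(X_s)\,y_s^* = M\,\Phi_0(X_tM)\,y_t$. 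The first step is the ``general'' fact already used in the proof of Theorem~\ref{MainThm}: since $\mc{P}$ is supported on $\{\|x\|\le 1\}$, any two linear predictors $x\mapsto w^Tx$, $x\mapsto (w')^Tx$ satisfy $\E_{x\sim\mc{P}}\frac{1}{2}\|w^Tx-(w')^Tx\|^2 \le \frac{1}{2}\|w-w'\|_2^2$, so $(X_s,y_s^*)$ is automatically a strong $\eps$-approximation of $(X_t,y_t)$ with respect to $(\lambda\text{-RR},\,0\text{-RR})$ and mean-square loss with $\eps = \frac{1}{2}\|\tilde w - w_0\|_2^2$. Everything then reduces to proving $\tilde w = w^*(\lambda)$ for $w^*(\lambda)$ as in (\ref{eq:argminw*}).

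For that identity I would use the minimum-norm characterization of the Moore--Penrose pseudoinverse. Put $z^* := \Phi_0(X_tM)\,y_t$: by definition this is the minimum-norm element of $\mathrm{argmin}_z\|y_t - X_tMz\|^2$, so in particular $z^* \in \ker(X_tM)^\bot$ and $\tilde w = Mz^*$. As $z$ ranges over $\ker(X_tM)^\bot$ the vector $Mz$ ranges over exactly $W = \mathrm{im}(M|_{\ker(X_tM)^\bot})$, and this correspondence $z\mapsto Mz$ is a bijection from $\ker(X_tM)^\bot$ onto $W$ (if $Mz=0$ with $z\in\ker(X_tM)^\bot$ then $z\in\ker(M)\cap\ker(X_tM)^\bot\subseteq\ker(X_tM)\cap\ker(X_tM)^\bot=\{0\}$); since $X_t(Mz)=X_tMz$, minimizing $\|y_t - X_tw\|^2$ over $w\in W$ is equivalent to minimizing $\|y_t - X_tMz\|^2$ over $z\in\ker(X_tM)^\bot$. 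The unconstrained minimizers in $z$ form the affine set $z^*+\ker(X_tM)$, which meets the subspace $\ker(X_tM)^\bot$ only at $z^*$, so $z^*$ is the unique constrained minimizer, and $X_t$ is moreover injective on $W$, so $w^*(\lambda)$ genuinely is a single point; hence $w^*(\lambda)=Mz^*=\tilde w$. Together with the first step this establishes the $\eps(\lambda)$ bound (\ref{eq:eps(lambda)}).

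For the ``moreover'' part I would specialize to $\lambda=0$, so $M=\Phi_0(X_s)=X_s^+$, and use the hypotheses $\mathrm{rank}(X_s)=\mathrm{rank}(X_t)=d$. Then $X_t$ has full column rank, so $\ker(X_tX_s^+)=\ker(X_s^+)$; and $X_s^+$ is surjective onto $\R^d$ (from $X_s^+X_s=I_d$), so $X_s^+$ restricted to $\ker(X_s^+)^\bot$ is a bijection onto $\R^d$, i.e. $W=\R^d$. Hence $w^*(0)=\mathrm{argmin}_{w\in\R^d}\|y_t-X_tw\|^2 = X_t^+y_t = w_0$. Finally $y_s^*=(X_tX_s^+)^+y_t$ lies in $\mathrm{im}((X_tX_s^+)^+)=\ker(X_tX_s^+)^\bot=\ker(X_s^+)^\bot=\mathrm{im}(X_s)$, so I can write $y_s^*=X_sv$ with $v$ unique (full column rank of $X_s$), and then $v=X_s^+X_sv=X_s^+y_s^*=Mz^*=\tilde w=w_0$, whence $y_s^*=X_sw_0$, which is precisely the vector of predictions of the $0$-RR classifier $x\mapsto w_0^Tx$ on the rows of $X_s$.

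The only step that requires real care — the main obstacle — is the bookkeeping around $W$ in the second paragraph: one must check that the ``argmin'' defining $w^*(\lambda)$ really is a single point (injectivity of $X_t$ on $W$), that $W$ is exactly the image of $M$ on $\ker(X_tM)^\bot$ and that $z\mapsto Mz$ identifies the two minimization problems bijectively, and that the constrained minimizer over $\ker(X_tM)^\bot$ is the minimum-norm pseudoinverse solution $z^*$. The identity $\Phi_0(AB)=\Phi_0(B)\Phi_0(A)$ for $A$ injective and $B$ surjective (\cite{10.2307/2027337}), already invoked in Section~\ref{sec:KIP}, can streamline some of these manipulations when the rank conditions hold, but the minimum-norm argument above avoids needing it in general.
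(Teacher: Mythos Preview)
Your proposal is correct and follows essentially the same approach as the paper: both identify $\tilde w = \Phi_\lambda(X_s)y_s^*$ as the $\lambda$-RR coefficients, use the minimum-norm property of $y_s^*$ to place it in $\ker(X_t\Phi_\lambda(X_s))^\bot$ and thereby identify $\tilde w$ with $w^*(\lambda)$, and then invoke the pointwise bound from Theorem~\ref{MainThm} for the $\eps$-approximation. The only minor difference is in the ``moreover'' clause: the paper computes $w^*(0)$ directly via the identity $\Phi_0(AB)=\Phi_0(B)\Phi_0(A)$ (for $A$ injective, $B$ surjective), whereas you instead show $W=\R^{d}$ under the rank hypotheses so that the constrained argmin becomes the unconstrained one; your route also makes the uniqueness of $w^*(\lambda)$ and the derivation of $y_s^*=X_sw_0$ explicit, points the paper leaves to the reader.
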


\begin{proof}
By definition, $y_s^*$ is the minimizer of 
$$L(y_s) = \frac{1}{2}\|y_t - X_t\Phi_\lambda(X_s)y_s\|^2,$$
with minimum norm. This implies $y_s^* \in \ker \Big(X_t\Phi_\lambda(X_s)\Big)^\bot$ and that $w^*(\lambda) = \Phi_\lambda(X_s)y_s^*$ satisfies (\ref{eq:argminw*}). At the same time, $w^*(\lambda) = \Phi_\lambda(X_s)y_s^*$ are the coefficients of the  $\lambda$-RR classifier trained on $(X_s, y_s^*)$. If $\mr{rank}(X_s) = \mr{rank}(X_t) = d$, then $\Phi_0(X_s)$ is surjective and $X_t$ is injective, in which case
\begin{align*}
    \omega^*(0) &= \Phi_0(X_s)y_s^*\\
    &= \Phi_0(X_s)\Phi_0(X_t\Phi_0(X_s))y_t\\
    &= \Phi_0(X_s)X_s\Phi_0(X_t)y_t\\
    &= \omega_0.
\end{align*}
The results follow.
\end{proof}

For general kernels, we make the following simple observation concerning the optimal output of \algname.

\begin{theorem}
Fix a target dataset $(X_t, y_t)$. Consider the family of all subspaces $\mc{S}$ of $\R^{n_t}$ given by $\{\mr{im}\, K_{X_tX_s} : X_s \in \R^{n_s \times d}\}$, i.e. all possible column spaces of $K_{X_tX_s}$. Then the infimum of the loss (\ref{eq:loss}) over all possible $(X_s, y_s)$ is equal to $\inf_{S \in \mc{S}}\frac{1}{2}\|\Pi^\bot_S y_t\|^2$ where $\Pi^\bot_S$ is orthogonal projection onto the orthogonal complement of $S$ (acting identically on each label component).
\end{theorem}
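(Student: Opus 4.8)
The plan is to separate the minimization over the support labels $y_s$ from that over the support points $X_s$, exploiting that for a fixed $X_s$ the loss (\ref{eq:loss}) is quadratic in $y_s$. First fix $X_s \in \R^{n_s\times d}$. Since every kernel Gram matrix is positive semidefinite and $\lambda>0$, the matrix $K_{X_sX_s}+\lambda I$ is invertible, so the map $y_s\mapsto (K_{X_sX_s}+\lambda I)^{-1}y_s$ is a linear bijection of $\R^{n_s\times C}$ onto itself. Therefore, as $y_s$ ranges over $\R^{n_s\times C}$, the prediction matrix $K_{X_tX_s}(K_{X_sX_s}+\lambda I)^{-1}y_s$ ranges over exactly the set of $n_t\times C$ matrices each of whose columns lies in $S:=\mr{im}\,K_{X_tX_s}\subseteq\R^{n_t}$; the point of invertibility is that we exhaust \emph{all} such matrices, not merely a proper subset.

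Next I would recognize the resulting inner infimum as a least-squares projection. Minimizing $\|y_t-v\|^2$ in the Frobenius norm over all $n_t\times C$ matrices $v$ whose columns lie in $S$ decouples into $C$ identical scalar problems, one per label column; the optimal $v$ is the column-wise orthogonal projection of $y_t$ onto $S$, and the minimum value is $\|\Pi^\bot_S y_t\|^2$, with $\Pi^\bot_S$ acting identically on each component exactly as in the statement. Hence $\inf_{y_s} L(X_s,y_s) = \tfrac12\|\Pi^\bot_S y_t\|^2$.

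Finally, I take the infimum over the support points (and, consistently with the definition of $\mc S$, over $n_s\ge 1$). Since $\mc S$ is by definition the family of all attainable column spaces $\mr{im}\,K_{X_tX_s}$, this yields
\[
\inf_{X_s,\,y_s} L(X_s,y_s) \;=\; \inf_{X_s}\ \tfrac12\big\|\Pi^\bot_{\mr{im}\,K_{X_tX_s}}\,y_t\big\|^2 \;=\; \inf_{S\in\mc S}\ \tfrac12\big\|\Pi^\bot_S y_t\big\|^2,
\]
which is the claimed identity.

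I do not expect a genuine obstacle here; it is essentially a two-line observation. The two points deserving a sentence of care are (i) invoking invertibility of $K_{X_sX_s}+\lambda I$ so that the reachable predictions genuinely fill out every matrix with columns in $S$, and (ii) the bookkeeping across the $C$ label columns — that the Frobenius-norm least-squares problem splits into $C$ independent projections onto the common subspace $S$, so the aggregate minimum is $\tfrac12\|\Pi^\bot_S y_t\|^2$. One should also note that the infimum need not be attained (no $X_s$ need realize the optimal $S$), which is why the statement is phrased with infima throughout.
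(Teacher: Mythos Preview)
Your proposal is correct and follows essentially the same approach as the paper: fix $X_s$, use invertibility of $K_{X_sX_s}+\lambda I$ to see that $(K_{X_sX_s}+\lambda I)^{-1}y_s$ sweeps all of $\R^{n_s\times C}$ as $y_s$ varies, so the reachable predictions are exactly the matrices with columns in $\mr{im}\,K_{X_tX_s}$, and the inner minimum is the projection residual. The paper's proof is a two-sentence sketch of precisely this; your write-up simply fills in the details (invertibility, the column-wise decoupling for the Frobenius norm) that the paper leaves implicit.
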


\begin{proof}
Since $y_s$ is trainable, $(K_{X_sX_s} + \lambda)^{-1}y_s$ is an arbitrary vector in $\R^{n_s \times C}$. Thus, minimizing the training objective corresponds to maximizing the range of the linear map $K_{X_tX_s}$ over all possible $X_s$. The result follows.
\end{proof}

\section{Experiment Details}\label{sec:ExpDetails}

In all \KIP trainings, we used the Adam optimizer. All our labels are mean-centered $1$-hot labels. We used learning rates $0.01$ and $0.04$ for the MNIST and CIFAR-10 datasets, respectively. When sampling target batches, we always do so in a class-balanced way. When augmenting data, we used the ImageGenerator class from Keras, which enables us to add horizontal flips, height/width shift, rotatations (up to 10 degrees), and channel shift (for CIFAR-10). All datasets are preprocessed using channel-wise standardization (i.e. mean subtraction and division by standard-deviation). For neural (tangent) kernels, we always use weight and bias variance $\sigma_w^2 = 2$ and $\sigma_b^2 = 10^{-4}$, respectively. For both neural kernels and neural networks, we always use ReLU activation. Convolutional layers all use a $(3, 3)$ filter with stride 1 and same padding. 

\textit{Compute Limitations:} Our neural kernel computations, implemented using Neural Tangents libraray~\citep{neuraltangents2020} are such that computation scales (i) linearly with depth; (ii) quadratically in the number of pixels for convolutional kernels; (iii) quartically in the number of pixels for pooling layers.  Such costs mean that, using a single V100 GPU with 16GB of RAM, we were (i) only able to sample shallow kernels; (ii) for convolutional kernels, limited to small support sets and small target batch sizes; (iii) unable to use pooling if learning more than just a few images. Scaling up \KIP to deeper, more expensive architectures, achievable using multi-device training, will be the subject of future exploration.

\textit{Kernel Parameterization:} Neural tangent kernels, or more precisely each neural network layer of such kernels, as implemented in \cite{neuraltangents2020} can be parameterized in either the ``NTK" parameterization or ``standard" parameterization \cite{sohl2020infinite}. The latter depends on the width of a corresponding finite-width neural network while the former does not. Our experiments mix both these parameterizations for variety. However, because we use a scale-invariant regularization for KRR (Section \ref{section:tuning}), the choice of parameterization has a limited effect compared to other more significant hyperparameters (e.g. the support dataset size, learning rate, etc.)\footnote{
All our final readout layers use the fixed NTK parameterization and all our statements about which parameterization we are using should be interpreted accordingly. 
This has no effect on the training of our neural networks  while for kernel results, this affects the recursive formula for the NTK at the final layer if using standard parameterization (by the changing the relative scales of the terms involved). Since the train/test kernels are consistently parameterized and \algname can adapt to the scale of the kernel, the difference between our hybrid parameterization and fully standard parameterization has a limited affect.}. 

\textit{Single kernel results:} (Tables \ref{table:MNISTbaselines} and \ref{table:CIFAR10baselines}) For FC, we used kernels with NTK parametrization. For RBF, our rbf kernel is given by \begin{equation}
    \mathrm{rbf}(x_1,x_2) = \exp(-\gamma\|x_1-x_2\|^2/d) \label{eq:rbf}
\end{equation}
where $d$ is the dimension of the inputs and $\gamma = 1$. We found that treating $\gamma$ as a learnable parameter during \algname had mixed results\footnote{On MNIST it led to very slight improvement. For CIFAR10, for small support sets, the effect was a small improvement on the test set, whereas for large support sets, we got worse performance.} and so keep it fixed for simplicity.

For MNIST, we found target batch size equal to 6K sufficient. For CIFAR-10, it helped to sample the entire training dataset of 50K images per step (hence, along with sampling the full support set, we are doing full gradient descent training). When support dataset size is small or if augmentations are employed, there is no overfitting (i.e. the train and test loss/accuracy stay positively correlated). If the support dataset size is large (5K or larger), sometimes there is overfitting when the target batch size is too large (e.g. for the RBF kernel on CIFAR10, which is why we exclude in Table \ref{table:CIFAR10baselines} the entries for 5K and 10K). We could have used a validation dataset for a stopping criterion, but that would have required reducing the target dataset from the entire training dataset.

We train \KIP for 10-20k iterations and took 5 random subsets of images for initializations. For each such training, we took 5 checkpoints with lowest train and loss and computed the test accuracy. This gives 25 evaluations, for which we can compute the mean and standard deviation for our test accuracy numbers in Tables \ref{table:MNISTbaselines} and \ref{table:CIFAR10baselines}.

\textit{Kernel transfer results:} For transfering of \KIP images, both to other kernels and to neural networks, we found it useful to use smaller target batch sizes (either several hundred or several thousand), else the images overfit to their source kernel. For random sampling of kernels used in Figure \ref{fig:kernel-transfer-cifar} and producing datasets for training of neural networks, we used FC kernels with width 1024 and Conv kernels with width 128, all with standard parametrization.

\textit{Neural network results:} 
Neural network trainings on natural data with mean-square loss use mean-centered one-hot labels for consistency with \KIP trainings. For cross entropy loss, we use one-hot labels. For neural network trainings on \KIP-learned images with label learning, we transfer over the labels directly (as with the images), whatever they may be. 

For neural network transfer experiments occurring in Table \ref{table:MNISTbaselines}, Table \ref{table:CIFAR10baselines}, Figure \ref{fig:cifar-nn}, Table \ref{table:KIPtoNNMNIST}, and Table \ref{table:KIPtoNNCIFAR10}, we did the following. First, the images were learned using kernels FC1-3, Conv1-2. Second, we trained for a few hundred iterations, after which optimal test performance was achieved. On MNIST images, we trained the networks with constant learning rate and Adam optimizer with cross entropy loss. Learning rate was tuned over small grid search space. For the FC kernels and networks, we use width of 1024. On CIFAR-10 images, we trained the networks with constant learning rate, momentum optimizer with momentum 0.9. Learning rate, L2 regularization, parameterization (standard vs NTK) and loss type (mean square, softmax-cross-entropy) was tuned over small grid search space. Vanilla networks use constant width at each layer: for FC we use width of 1024, for Conv2 we use 512 channels, and for Conv8 we use 128 channels. No pooling layers are used except for the WideResNet architecture, where we follow the original architecture of \cite{zagoruyko2016wide} except that our batch normalization layer is stateless (i.e. no exponential moving average of batch statistics are recorded). 

For neural network transfer experiments in Figure \ref{fig:kip_vs_natural_2x2}, Tables \ref{table:MNIST_corruption_sweep_mse}-\ref{table:CIFAR10_corruption_sweep_xent}, we did the following. Our \KIP-learned images were trained using only an FC1 kernel. The neural network FC1 has an increased width 4096, which helps with the larger number of images. We used learning rate $4 \times 10^{-4}$ and the Adam optimizer. The \KIP learned images with only augmentations used target batch size equal to half the training dataset size and were trained for 10k iterations, since the use of augmentations allows for continued gains after longer training. The \KIP learned images with augmentations and label learning used target batch size equal to a tenth of the training dataset size and were trained for 2k iterations (the learned data were observed to overfit to the kernel and have less transferability if larger batch size were used or if trainings were carried out longer).

All neural network trainings were run with 5 random initializations to compute mean and standard deviation of test accuracies.

 In Table \ref{table:CIFAR10baselines}, regularized ZCA  preprocessing was used for a Myrtle-10 kernel (denoted with \emph{ZCA}) on CIFAR-10 dataset. \citet{shankar2020neural} and \citet{lee2020finite} noticed that for neural (convolutional) kernels on image classification tasks, regularized ZCA preprocessing can improve performance significantly compared to standard preprocessing. We follow the prepossessing scheme used in \citet{shankar2020neural}, with regularization strength of $10^{-5}$ without augmentation.

\section{Tables and Figures}

\subsection{Kernel Baselines}

We report various baselines of KRR trained on natural images. Tables \ref{table:subsample_mnist} and \ref{table:subsample_cifar10} shows how various kernels vary in performance with respect to random subsets of MNIST and CIFAR-10. Linear denotes a linear kernel, RBF denotes the rbf kernel (\ref{eq:rbf}) with $\gamma = 1$, and FC1 uses standard parametrization and width 1024. Interestingly enough, we observe non-monotonicity for the linear kernel, owing to double descent phenomenon \cite{hastie2019surprises}. We include additional columns for deeper kernel architectures in Table \ref{table:subsample_cifar10}, taken from \cite{shankar2020neural} for reference.

Comparing Tables \ref{table:MNISTbaselines}, \ref{table:CIFAR10baselines} with Tables \ref{table:subsample_mnist}, \ref{table:subsample_cifar10}, we see that 10 \KIP-learned images, for both RBF and FC1, has comparable performance to several thousand natural images, thereby achieving a compression ratio of over 100. This compression ratio narrows as the support size increases towards the size of the training data.

Next, Table \ref{table:sota-mnist-kernel} compares FC1, RBF, and other kernels trained on all of MNIST to FC1 and RBF trained on \KIP-learned images. We see that our \KIP approach, even with 10K images (which fits into memory), leads to RBF and FC1 matching the performance of convolutional kernels on the original 60K images. Table \ref{table:sota-cifar10-kernel} shows state of the art of FC kernels on CIFAR-10. The prior state of the art used kernel ensembling on batches of augmented data in  \cite{lee2020finite} to obtain test accuracy of 61.5\% (32 ensembles each of size 45K images). By distilling augmented images using \KIP, we are able to obtain 64.7\% test accuracy using only 10K images.\\

\begin{table}[t!]
\centering
\begin{threeparttable}
\caption{\label{table:subsample_mnist} \textbf{Accuracy on random subsets of MNIST.} Standard deviations over 20 resamplings.}
\begin{tabular}{lcccc}  
\toprule
\cmidrule(r){1-2}
\textbf{\# Images $\setminus$ Kernel}   &  Linear & RBF & FC1 \\ \midrule
10 & 44.6$\pm$3.7 & 45.3$\pm$3.9 &45.8$\pm$3.9 \\
20 & 51.9$\pm$3.1 &54.6$\pm$2.9 &54.7$\pm$2.8\\
40 & 59.4$\pm$2.4 & 66.9$\pm$2.0 &66.0$\pm$1.9\\
80 & 62.6$\pm$2.7 &75.6$\pm$1.6 &74.3$\pm$1.7\\
160 & 62.2$\pm$2.1 &82.7$\pm$1.4 &81.1$\pm$1.6\\
320 & 52.3$\pm$1.9 &88.1$\pm$0.8 &86.9$\pm$0.9\\
640 & 41.9$\pm$1.4 &91.8$\pm$0.5 &91.1$\pm$0.5\\
1280 & 71.0$\pm$0.9 &94.2$\pm$0.3 &93.6$\pm$0.3\\
2560 & 79.7$\pm$0.5 &95.7$\pm$0.2 &95.3$\pm$0.2 \\
5000 & 83.2$\pm$0.4 &96.8$\pm$0.2 &96.4$\pm$0.2 \\
10000 & 84.9$\pm$0.4 &97.5$\pm$0.2 &97.2$\pm$0.2\\
\bottomrule
\end{tabular}
\footnotesize
\end{threeparttable}
\end{table}

\begin{table}[h!]
\centering
\begin{threeparttable}
\caption{\label{table:subsample_cifar10} \textbf{Accuracy on random subsets of CIFAR-10}. Standard deviations over 20 resamplings.}
\begin{tabular}{lcccccc}  
\toprule
\cmidrule(r){1-2}
\textbf{\# Images $\setminus$ Kernel}   &  Linear & RBF & FC1 & CNTK\tnote{\dag}   & Myrtle10-G\tnote{\ddag} \\
\midrule
10     & 16.2$\pm$1.3  &  15.7$\pm$2.1 & 16.4$\pm$1.8 &15.33 $\pm$ 2.43  & 19.15  $\pm$  1.94  \\
20     & 17.1$\pm$1.6 & 17.1$\pm$1.7 & 18.0$\pm$1.9 & 18.79 $\pm$ 2.13  & 21.65  $\pm$  2.97  \\
40     & 17.8$\pm$1.6& 19.7$\pm$1.8 & 20.6$\pm$1.8 &21.34 $\pm$ 1.91  & 27.20  $\pm$  1.90  \\
80     & 18.6$\pm$1.5 & 23.0$\pm$1.5 & 23.9$\pm$1.6 &25.48 $\pm$ 1.91  & 34.22  $\pm$  1.08  \\
160     & 18.5$\pm$1.4 & 25.8$\pm$1.4 & 26.5$\pm$1.4 &30.48 $\pm$ 1.17  & 41.89  $\pm$  1.34 \\
320    & 18.1$\pm$1.1 & 29.2$\pm$1.2 & 29.9$\pm$1.1 &36.57 $\pm$ 0.88 & 50.06  $\pm$  1.06  \\
640    & 16.8$\pm$0.8 & 32.8$\pm$0.9 & 33.4$\pm$0.8 &42.63 $\pm$ 0.68 & 57.60  $\pm$  0.48  \\
1280  & 15.1$\pm$0.5 & 35.9$\pm$0.7 & 36.7$\pm$0.6 &48.86 $\pm$ 0.68  & 64.40  $\pm$  0.48  \\
2560 & 13.0$\pm$0.5&  39.1$\pm$0.7 & 40.2$\pm$0.7 & - & - \\
5000 & 17.8$\pm$0.4 & 42.1$\pm$0.5 & 43.7$\pm$0.6 & - & - \\
10000 & 24.9$\pm$0.6& 45.3$\pm$0.6 & 47.7$\pm$0.6 & - & - \\
\bottomrule
\end{tabular}
\footnotesize
\begin{tablenotes}
\item[\dag] Conv14 kernel with global average pooling~\citep{arora2019harnessing}
\item[\ddag] Myrtle10-Gaussian kernel~\citep{shankar2020neural}
\end{tablenotes}
\end{threeparttable}
\end{table}

\begin{table}[t]
\centering
\begin{threeparttable}
\caption{ \label{table:sota-mnist-kernel}\textbf{Classification performance on MNIST.} Our \KIP-datasets, fit to FC1 or RBF kernels, outperform non-convolutional kernels trained on all training images.} 
\begin{tabular}{lll}
\toprule
\textbf{Kernel} & \textbf{Method} & \textbf{Accuracy} \\ \midrule
FC1 & Base\tnote{1}                                    & 98.6 \\
ArcCosine Kernel\tnote{2}  & Base                     & 98.8 \\
Gaussian Kernel    & Base                    & 98.8  \\
FC1 &  KIP (a+l)\tnote{3}, 10K images           & 99.2 \\
LeNet-5~\citep{lecun1998gradient} & Base        & 99.2     \\
RBF & KIP (a+l), 10K images           & 99.3 \\
Myrtle5 Kernel~\citep{shankar2020neural} & Base  & 99.5     \\ 
CKN~\citep{mairal2014convolutional} & Base   & 99.6       \\ \bottomrule
\end{tabular}
\begin{tablenotes}
\item[1] Base refers to training on entire training dataset of natural images.
\item[2] Non RBF/FC numbers taken from~\citep{shankar2020neural}
\item[3] (a + l) denotes \KIP with augmentations and label learning during training.
\end{tablenotes}
\end{threeparttable}
\end{table}

\begin{table}[t]
\centering
\caption{\label{table:sota-cifar10-kernel}\textbf{CIFAR-10 test accuracy for FC/RBF kernels.} Our \KIP-datasets, fit to RBF/FC1, outperform baselines with many more images. Notation same as in Table \ref{table:sota-mnist-kernel}.}
\vspace{0.1cm}
\begin{tabular}{@{}llcc@{}}
\toprule
\textbf{Kernel}  & \textbf{Method} & \textbf{Accuracy} \\
\midrule
FC1 & Base & 57.6  \\
FC3 & Ensembling~\citep{lee2020finite}       & 61.5    \\
FC1 & KIP (a+l), 10k images & \textbf{64.7} \\
\midrule
RBF & Base & 52.7 & \\
RBF & KIP (a+l), 10k images & \textbf{66.3} \\
\bottomrule
\end{tabular}
\end{table}

\subsection{\KIP and \LS Transfer Across Kernels}

Figure \ref{fig:kernel-transfer-cifar} plots how \KIP (with only images learned) performs across kernels. There are seven training scenarios: training individually on FC1, FC2, FC3, Conv1, Conv2, Conv3 NTK kernels and random sampling from among all six kernels uniformly (Avg All). Datasets of size 10, 100, 200 are thereby trained then evaluated by averaging over all of FC1-3, Conv1-3, both with the NTK and NNGP kernels for good measure. Moreover, the FC and Conv train kernel widths (1024 and 128) were swapped at test time (FC width 128 and Conv width 1024), as an additional test of robustness. The average performance is recorded along the y-axis. AvgAll leads to overall boost in performance across kernels. Another observation is that Conv kernels alone tend to do a bit better, averaged over the kernels considered, than FC kernels alone.

\begin{figure}[ht!]
\centering
\includegraphics[width=\columnwidth]{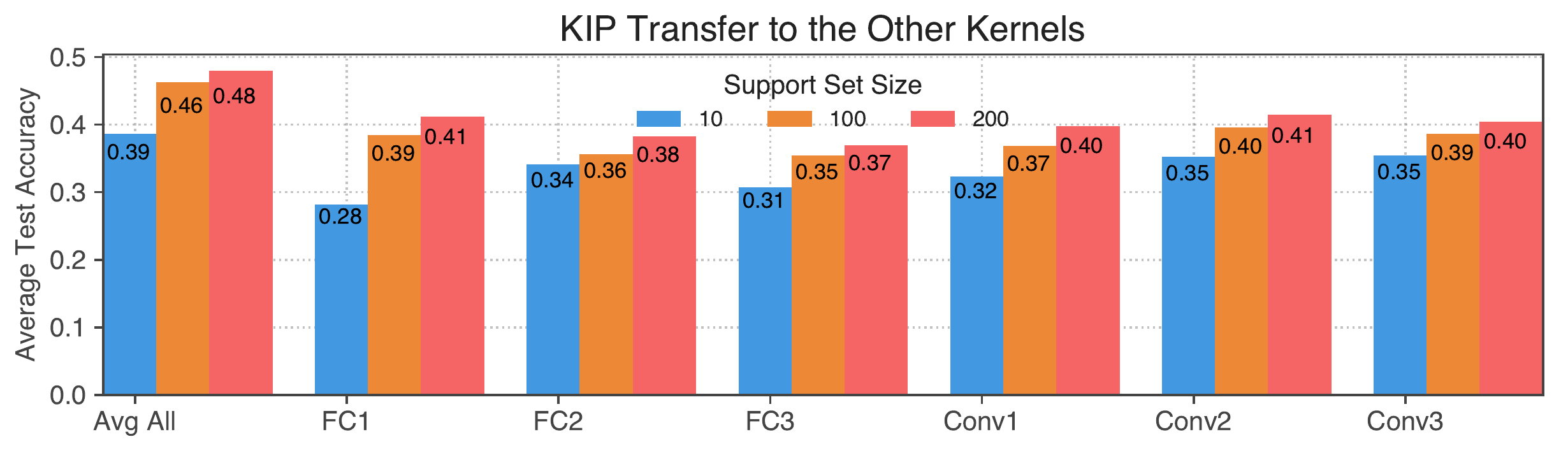}
\caption{\textbf{Studying transfer between kernels.} 
}
\label{fig:kernel-transfer-cifar}
\end{figure}

\begin{figure}[h]
    \centering
    \vspace{.3in}
    \includegraphics[width=0.9\columnwidth]{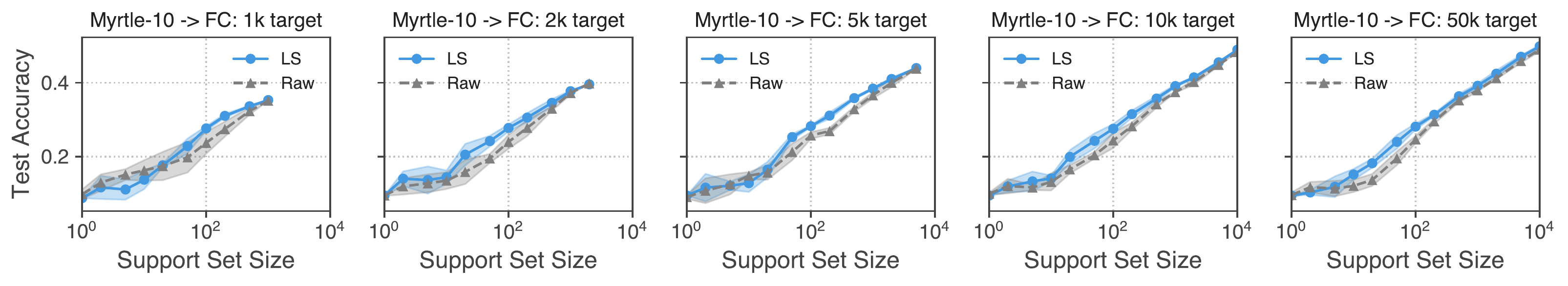}
     \includegraphics[width=0.9\columnwidth]{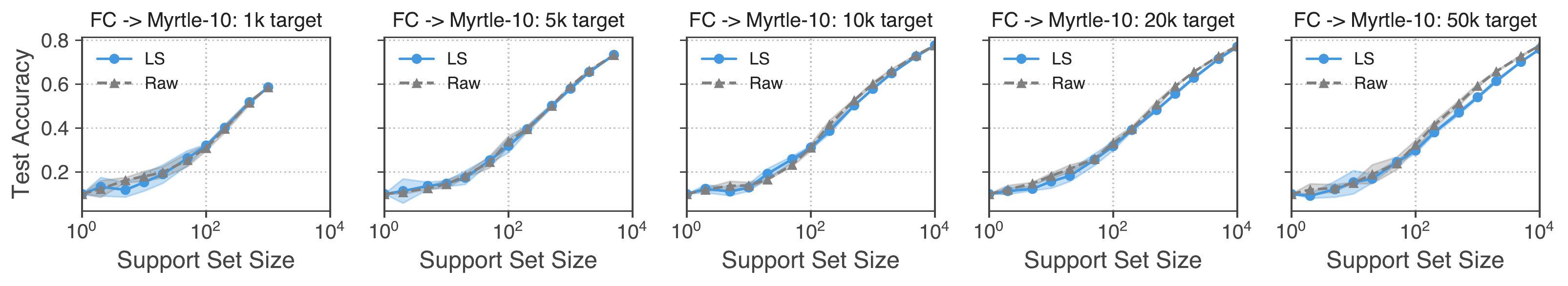}
    \caption{ \textbf{Label Solve transfer between Myrtle-10 and FC for CIFAR10.} Top row: \LS labels using Myrtle-10 applied to FC1. Bottom row: \LS labels using FC1 applied to Myrtle-10. Results averaged over 3 samples per support set size. In all these plots, NNGP kernels were used and Myrtle-10 used regularized ZCA preprocessing.}
    \label{fig:ls_kernel_transfer}
\end{figure}

In Figure \ref{fig:ls_kernel_transfer}, we plot how \LS learned labels using Myrtle-10 kernel transfer to the FC1 kernel and vice versa. We vary the number of targets and support size. We find remarkable stability across all these dimensions in the sense that while the gains from \LS may be kernel-specific, \LS-labels do not perform meaningfully different from natural labels when switching the train and evaluation kernels.

\subsection{\KIP transfer to Neural Networks and Corruption Experiments}

\begin{table}[h!]
\centering
\resizebox{\textwidth}{!}{
\begin{threeparttable}
\caption{\label{table:KIPtoNNMNIST} \textbf{KIP transfer to NN vs NN baselines on MNIST.} For each group of four experiments, the best number is marked boldface, while the second best number is in italics. Corruption refers to $90\%$ noise corruption. KIP images used FC1-3, Conv1-2 kernel during training.}
\begin{tabular}{l|cc|cc|cc} \toprule
\textbf{Method}             & 10 uncrpt & 10 crpt & 100 uncrpt & 100 crpt & 200 uncrpt & 200 crpt \\ \midrule 
FC1, KIP   & \textbf{73.57$\pm$1.51} & \textit{44.95$\pm$1.23} & \textbf{86.84$\pm$1.65} & \textit{79.73$\pm$1.10} & \textbf{89.55$\pm$0.94} & \textit{83.38$\pm$1.37} \\
FC1, Natural                     & 42.28$\pm$1.59 & 35.00$\pm$2.33 & 72.65$\pm$1.17 & 45.39$\pm$2.25 & 81.70$\pm$1.03 & 54.20$\pm$2.61 \\ \midrule
LeNet, KIP & \textbf{59.69$\pm$8.98} & 38.25$\pm$6.42 & \textbf{87.85$\pm$1.46} & 69.45$\pm$3.99 & \textbf{91.08$\pm$1.65} & 70.52$\pm$4.39 \\
LeNet, Natural                  & \textit{48.69$\pm$4.10} & 30.56$\pm$4.35 & \textit{80.32$\pm$1.26} & 59.99$\pm$0.95 & \textit{89.03$\pm$1.13} & 62.00$\pm$0.94 \\ \bottomrule
\end{tabular}
\end{threeparttable}
}
\end{table}

\begin{table}[h]
\centering
\caption{
\label{table:KIPtoNNCIFAR10}\textbf{KIP transfer to NN vs NN baselines on CIFAR-10.} Notation same as in Table \ref{table:KIPtoNNMNIST}.}
\vspace{0.3cm}
\resizebox{0.5\textwidth}{!}{
\begin{tabular}{lcc} \toprule
\textbf{Method}  & 100 uncrpt & 100 crpt  \\ \midrule
FC3, KIP  & \textbf{43.09$\pm$0.20} & \textit{37.71$\pm$0.38} \\
FC3, Natural & 24.48$\pm$0.15 & 18.92$\pm$0.61 \\
\midrule
Conv2, KIP    & \textbf{43.68$\pm$0.46} & \textit{37.08$\pm$0.48} \\
Conv2, Natural  & 26.23 $\pm$0.69  & 17.10$\pm$1.33 \\
\midrule
WideResNet, KIP    & \textbf{33.29$\pm$1.14} & 23.89$\pm$1.30 \\
WideResNet, Natural  & \textit{27.93$\pm$0.75}  & 19.00$\pm$1.01 \\ \bottomrule
\end{tabular}
}
\end{table}

\begin{table}[h]
\centering
\caption{ \label{table:MNIST_corruption_sweep_mse}\textbf{MNIST. KIP and natural images on FC1. MSE Loss.} Test accuracy of image datasets of size 1K, 5K, 10K, trained using FC1 neural network using mean-square loss. Dataset size, noise corruption percent, and dataset type are varied: natural refers to natural images, KIP refers to \KIP-learned images with either augmentations only (a) or both augmentations with label learning (a + l). Only FC1 kernel was used for \KIP. For each KIP row, we place a * next to the most corrupt entry whose performance exceeds the corresponding 0\% corrupt natural images. For each dataset size, we boldface the best performing entry.
}
\vspace{0.3cm}
\begin{tabular}{lcccc} \toprule
\textbf{Dataset} & $0\%$ crpt & $50\%$ crpt & $75\%$ crpt & $90\%$ crpt \\ \midrule
Natural 1000 & 92.8$\pm$0.4 & 87.3$\pm$0.5 & 82.3$\pm$0.9 & 74.3$\pm$1.4\\
KIP (a) 1000 & 94.5$\pm$0.4 & 95.9$\pm$0.1 & 94.4$\pm$0.2* & 92.0$\pm$0.3\\
KIP (a+l) 1000 & \textbf{96.3$\pm$0.2} & 95.9$\pm$0.3 & 95.1$\pm$0.3 & 94.6$\pm$1.9*\\ \midrule
Natural 5000 & 96.4$\pm$0.1 & 92.8$\pm$0.2 & 88.5$\pm$0.5 & 80.0$\pm$0.9\\
KIP (a) 5000 & 97.0$\pm$0.6 & 97.1$\pm$0.6 & 96.3$\pm$0.2 & 96.6$\pm$0.4*\\
KIP (a+l) 5000 & \textbf{97.6$\pm$0.0}* & 95.8$\pm$0.0 & 94.5$\pm$0.4 & 91.4$\pm$2.3\\ \midrule
Natural 10000 & 97.3$\pm$0.1 & 93.9$\pm$0.1 & 90.2$\pm$0.1 & 81.3$\pm$1.0\\
KIP (a) 10000 & 97.8$\pm$0.1* & 96.1$\pm$0.2 & 95.8$\pm$0.2 & 96.0$\pm$0.2\\
KIP (a+l) 10000 & \textbf{97.9$\pm$0.1}* & 95.8$\pm$0.1 & 94.7$\pm$0.2 & 88.1$\pm$3.5\\
\bottomrule
\end{tabular}
\end{table}

\begin{table}[h]
\centering
\caption{ \label{table:MNIST_corruption_sweep_xent}\textbf{MNIST. KIP and natural images on FC1. Cross Entropy Loss.} Test accuracy of image datasets trained using FC1 neural network using cross entropy loss. Notation same as in Table \ref{table:MNIST_corruption_sweep_mse}.}
\vspace{0.3cm}
\begin{tabular}{lcccc} \toprule
\textbf{Dataset} & $0\%$ crpt & $50\%$ crpt & $75\%$ crpt & $90\%$ crpt \\ \midrule
Natural 1000 & 91.3$\pm$0.4 & 86.3$\pm$0.3 & 81.9$\pm$0.5 & 75.0$\pm$1.3\\
KIP (a) 1000 & \textbf{95.9$\pm$0.1} & 95.0$\pm$0.1 & 93.5$\pm$0.3* & 90.9$\pm$0.3\\ \midrule
Natural 5000 & 95.8$\pm$0.1 & 91.9$\pm$0.2 & 87.3$\pm$0.3 & 80.4$\pm$0.5\\
KIP (a) 5000 & \textbf{98.3$\pm$0.0} & 96.8$\pm$0.8* & 95.5$\pm$0.3 & 95.1$\pm$0.2\\ \midrule
Natural 10000 & 96.9$\pm$0.1 & 93.8$\pm$0.1 & 89.6$\pm$0.2 & 81.3$\pm$0.5\\
KIP (a) 10000 & \textbf{98.8$\pm$0.0} & 97.0$\pm$0.0* & 95.2$\pm$0.2 & 94.7$\pm$0.3\\
\bottomrule
\end{tabular}
\end{table}

\begin{table}[ht]
\centering
\caption{\label{table:CIFAR10_corruption_sweep_mse}\textbf{CIFAR-10. KIP and natural images on FC1. MSE Loss.} Test accuracy of image datasets trained using FC1 neural network using mean-square loss. Notation same as in Table \ref{table:MNIST_corruption_sweep_mse}.}
\vspace{0.3cm}
\begin{tabular}{lcccc} \toprule
\textbf{Dataset} & $0\%$ crpt & $50\%$ crpt & $75\%$ crpt & $90\%$ crpt \\ \midrule
Natural 1000 & 34.1$\pm$0.5 & 34.3$\pm$0.4 & 31.7$\pm$0.5 & 27.7$\pm$0.8\\
KIP (a) 1000 & \textbf{48.0$\pm$0.5} & 46.7$\pm$0.2 & 45.7$\pm$0.5 & 44.3$\pm$0.5*\\
KIP (a+l) 1000 & 47.5$\pm$0.3 & 46.7$\pm$0.8 & 44.3$\pm$0.4 & 41.6$\pm$0.5*\\ \midrule
Natural 5000 & 41.4$\pm$0.6 & 41.3$\pm$0.4 & 37.2$\pm$0.2 & 32.5$\pm$0.7\\
KIP (a) 5000 & \textbf{51.4$\pm$0.4} & 50.0$\pm$0.4 & 48.8$\pm$0.6 & 47.5$\pm$0.3*\\
KIP (a+l) 5000 & 50.6$\pm$0.5 & 48.5$\pm$0.9 & 44.7$\pm$0.6 & 43.4$\pm$0.5*\\ \midrule
Natural 10000 & 44.5$\pm$0.3 & 43.2$\pm$0.2 & 39.5$\pm$0.2 & 34.3$\pm$0.2\\
KIP (a) 10000 & \textbf{53.3$\pm$0.8} & 50.5$\pm$1.3 & 49.4$\pm$0.2 & 48.2$\pm$0.6*\\
KIP (a+l) 10000 & 51.9$\pm$0.4 & 50.0$\pm$0.5 & 46.5$\pm$1.0 & 43.8$\pm$1.3*\\
\bottomrule
\end{tabular}
\end{table}

\begin{table}[h!]
\centering
\caption{\label{table:CIFAR10_corruption_sweep_xent}\textbf{CIFAR-10. KIP and natural images on FC1. Cross Entropy Loss.} Test accuracy of image datasets trained using FC1 neural network using cross entropy loss. Notation same as in Table \ref{table:MNIST_corruption_sweep_mse}.}
\vspace{0.3cm}
\begin{tabular}{lcccc} \toprule
\textbf{Dataset} & $0\%$ crpt & $50\%$ crpt & $75\%$ crpt & $90\%$ crpt \\ \midrule
Natural 1000 & 35.4$\pm$0.3 & 35.4$\pm$0.3 & 31.7$\pm$0.9 & 27.2$\pm$0.8\\
KIP (a) 1000 & \textbf{49.2$\pm$0.8} & 47.6$\pm$0.4 & 47.4$\pm$0.4 & 45.0$\pm$0.3*\\ \midrule
Natural 5000 & 43.1$\pm$0.8 & 42.0$\pm$0.2 & 38.0$\pm$0.4 & 31.7$\pm$0.6\\
KIP (a) 5000 & 44.5$\pm$1.0 & \textbf{51.5$\pm$0.3} & 51.0$\pm$0.4 & 48.9$\pm$0.4*\\ \midrule
Natural 10000 & 45.3$\pm$0.2 & 44.8$\pm$0.1 & 40.6$\pm$0.3 & 33.8$\pm$0.2\\
KIP (a) 10000 & 46.9$\pm$0.4 & \textbf{54.0$\pm$0.3} & 52.1$\pm$0.3 & 49.9$\pm$0.2*\\
\bottomrule
\end{tabular}
\end{table}

\begin{figure}[t!]
\centering
\includegraphics[width=\columnwidth]{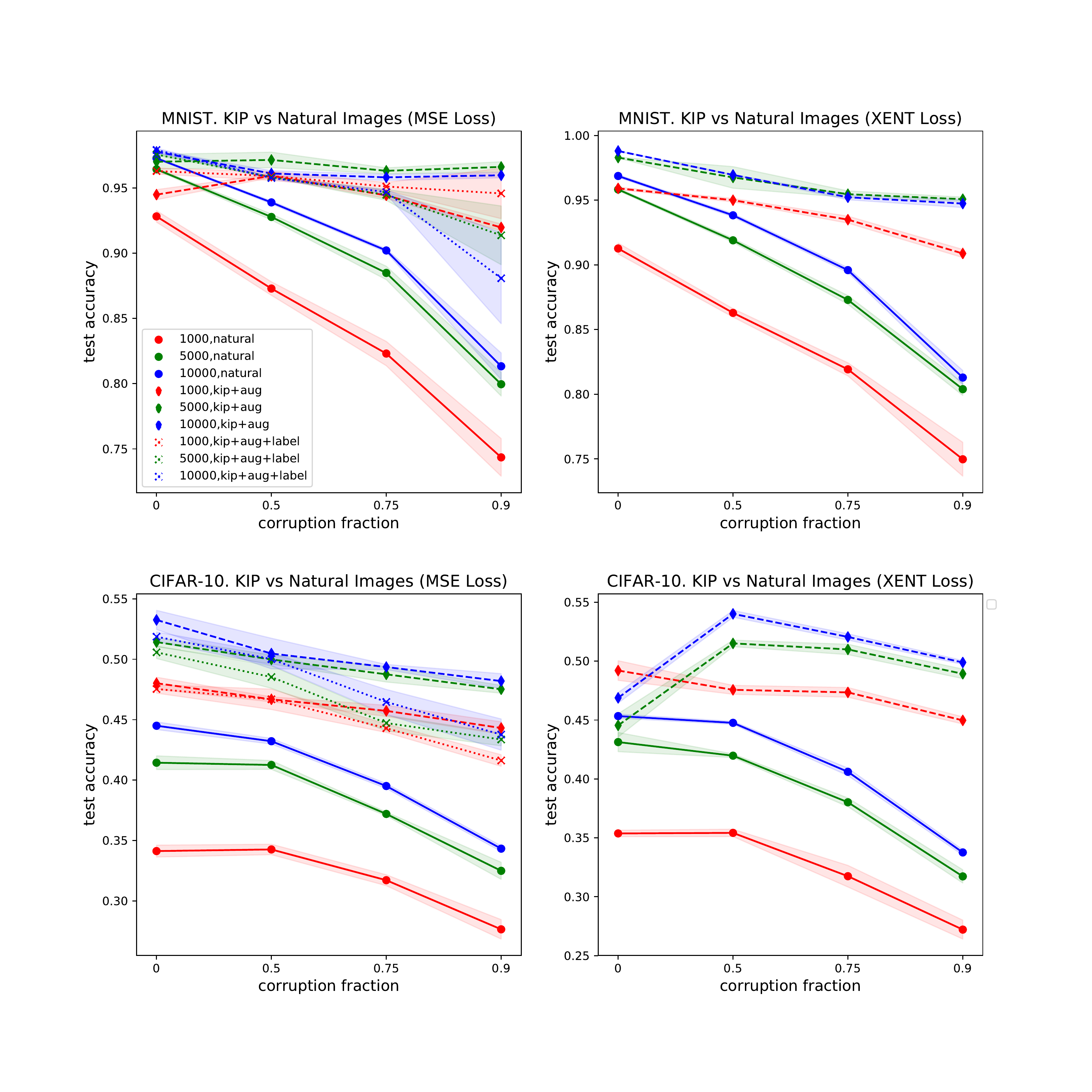}
\vspace{-0.6in}
\caption{ \label{fig:kip_vs_natural_2x2}\textbf{KIP vs natural images, FC1.} Data plotted from Tables \ref{table:MNIST_corruption_sweep_mse}-\ref{table:CIFAR10_corruption_sweep_xent}, showing natural images vs. \KIP images for FC1 neural networks across dataset size, corruption type, dataset type, and loss type. For instance, the upper right figure shows that on MNIST using cross entropy loss, 1k \KIP + aug learned images with 90\% corruption achieves 90.9\% test accuracy, comparable to 1k natural images (acc: 91.3\%) and far exceeding 1k natural images with 90\% corruption (acc: 75.0\%).  Similarly, the lower right figure shows on CIFAR10 using cross entropy loss, 10k \KIP + aug learned images with 90\% corruption achieves 49.9\%, exceeding 10k natural images (acc: 45.3\%) and 10k natural images with 90\% corruption (acc: 33.8\%).
}
\end{figure}

\clearpage

\section{Examples of \KIP learned samples}\label{sec:Examples}

\begin{figure}[h]
\centering
\includegraphics[width=.45\linewidth]{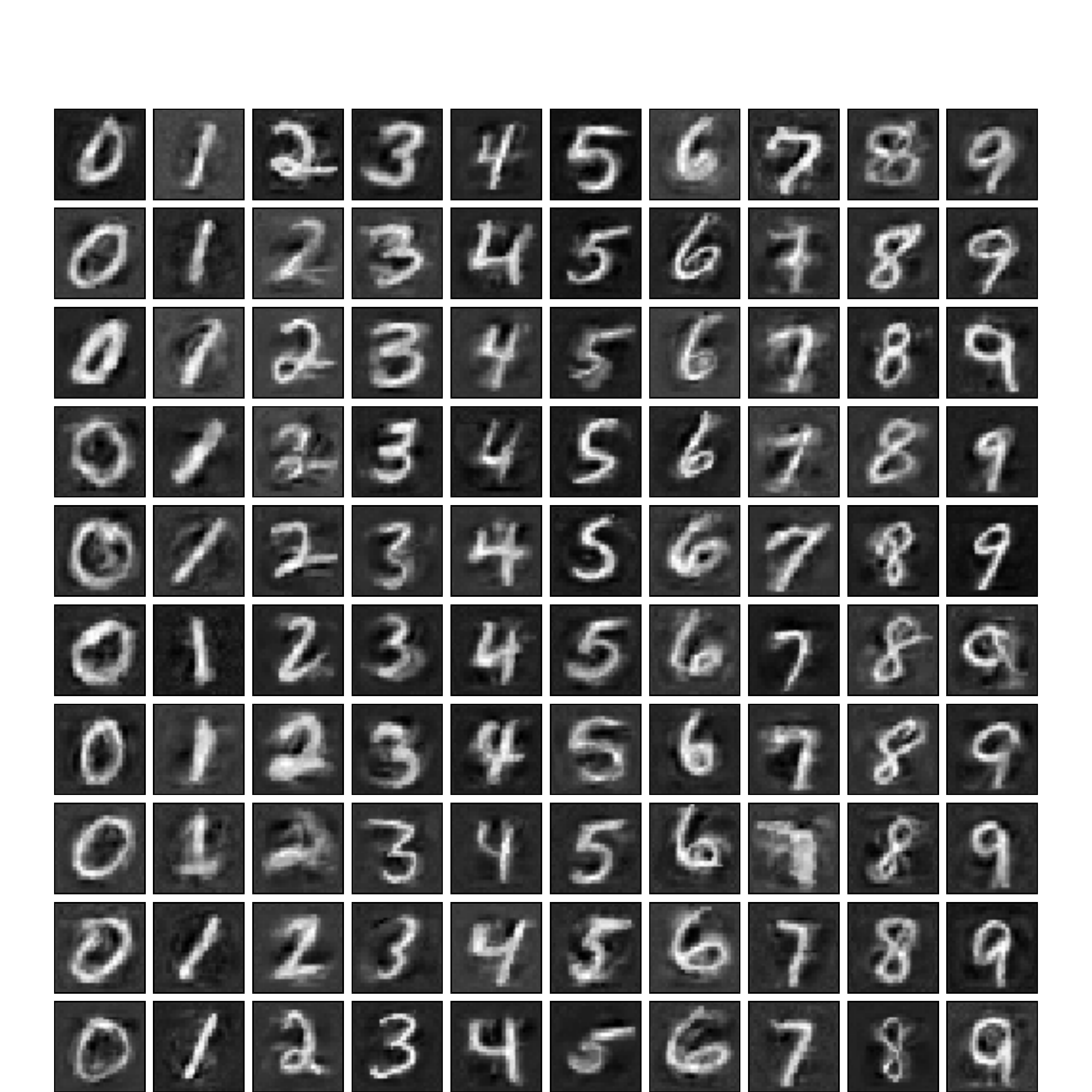}
\includegraphics[width=.45\linewidth]{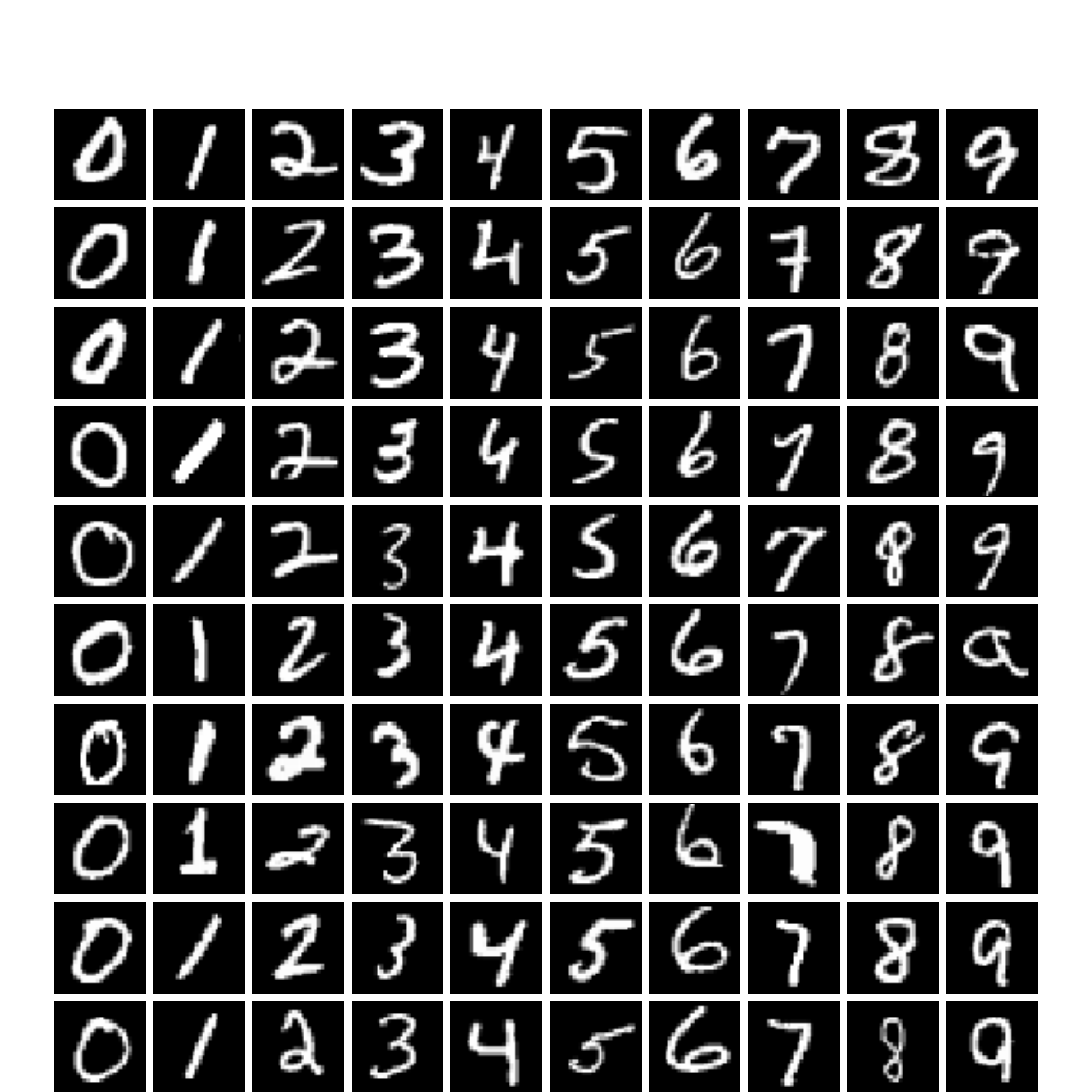}\\
\includegraphics[width=.45\linewidth]{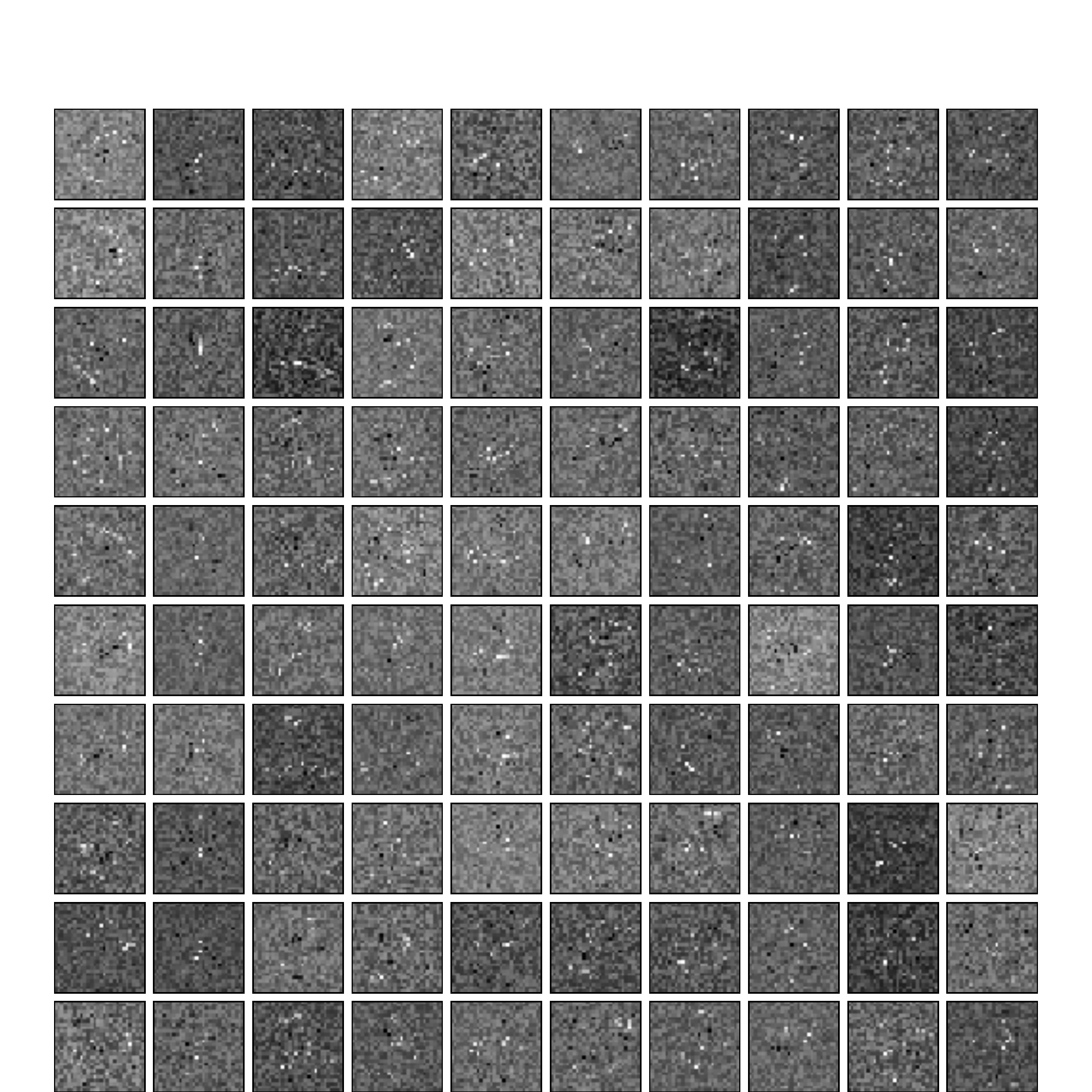}
\includegraphics[width=.45\linewidth]{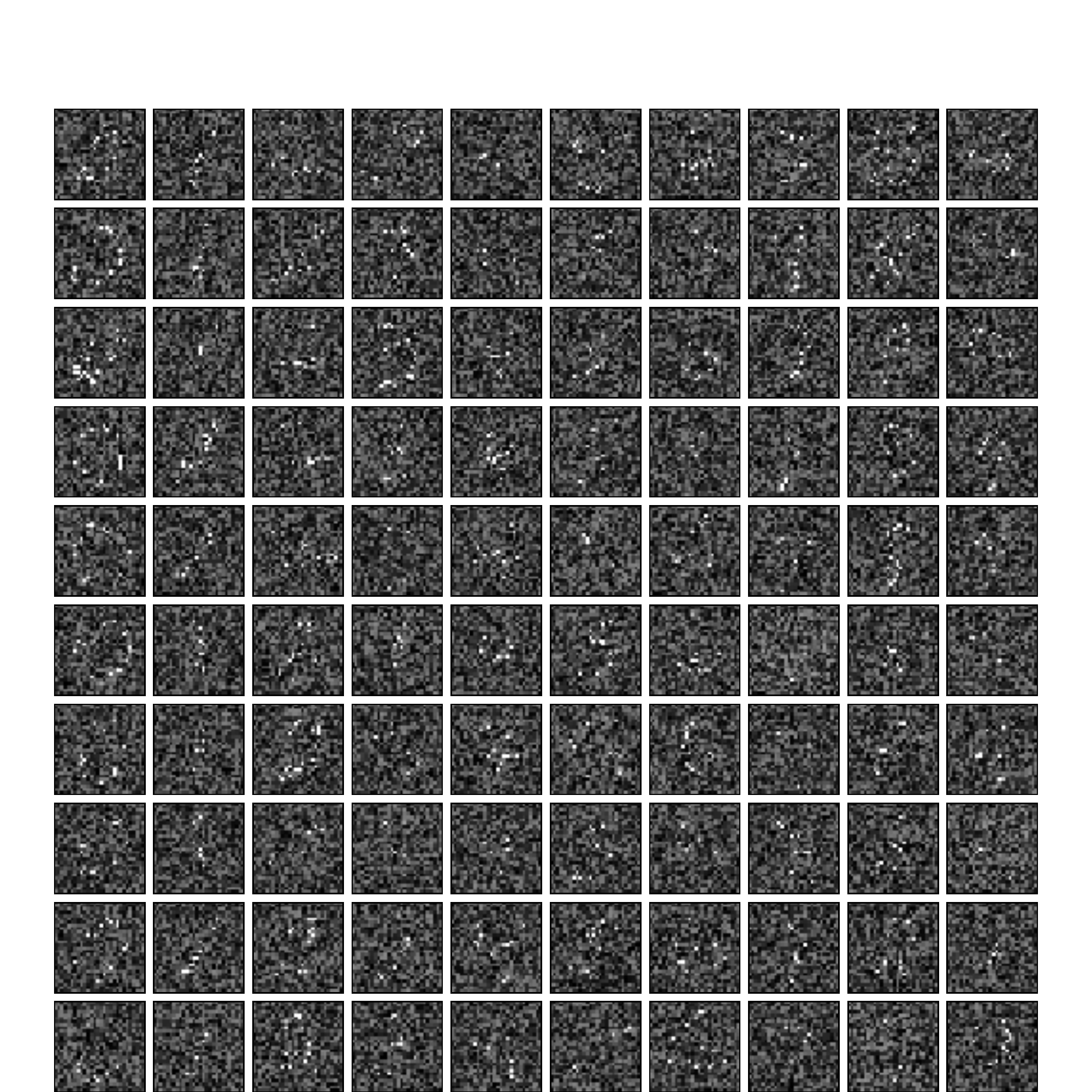}
\caption{\textbf{KIP learned images (left) vs natural MNIST images (right).}  Samples from 100 learned images. Top row: 0\% corruption. Bottom row: 90\% noise corruption.} 
\label{appfig:mnist_sample_images}
\end{figure}

\begin{figure}[h]
\centering
\includegraphics[width=.45\linewidth]{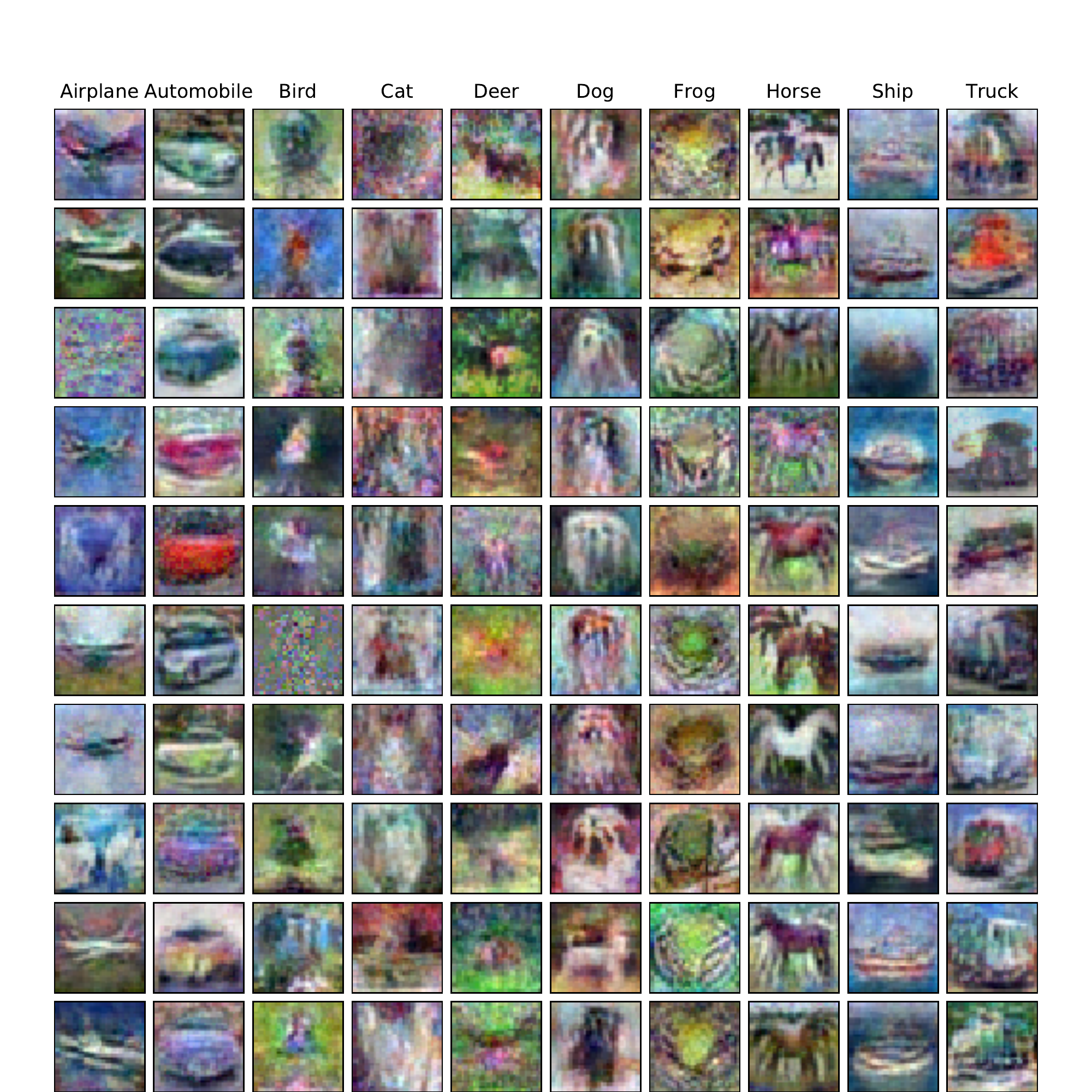}
\includegraphics[width=.45\linewidth]{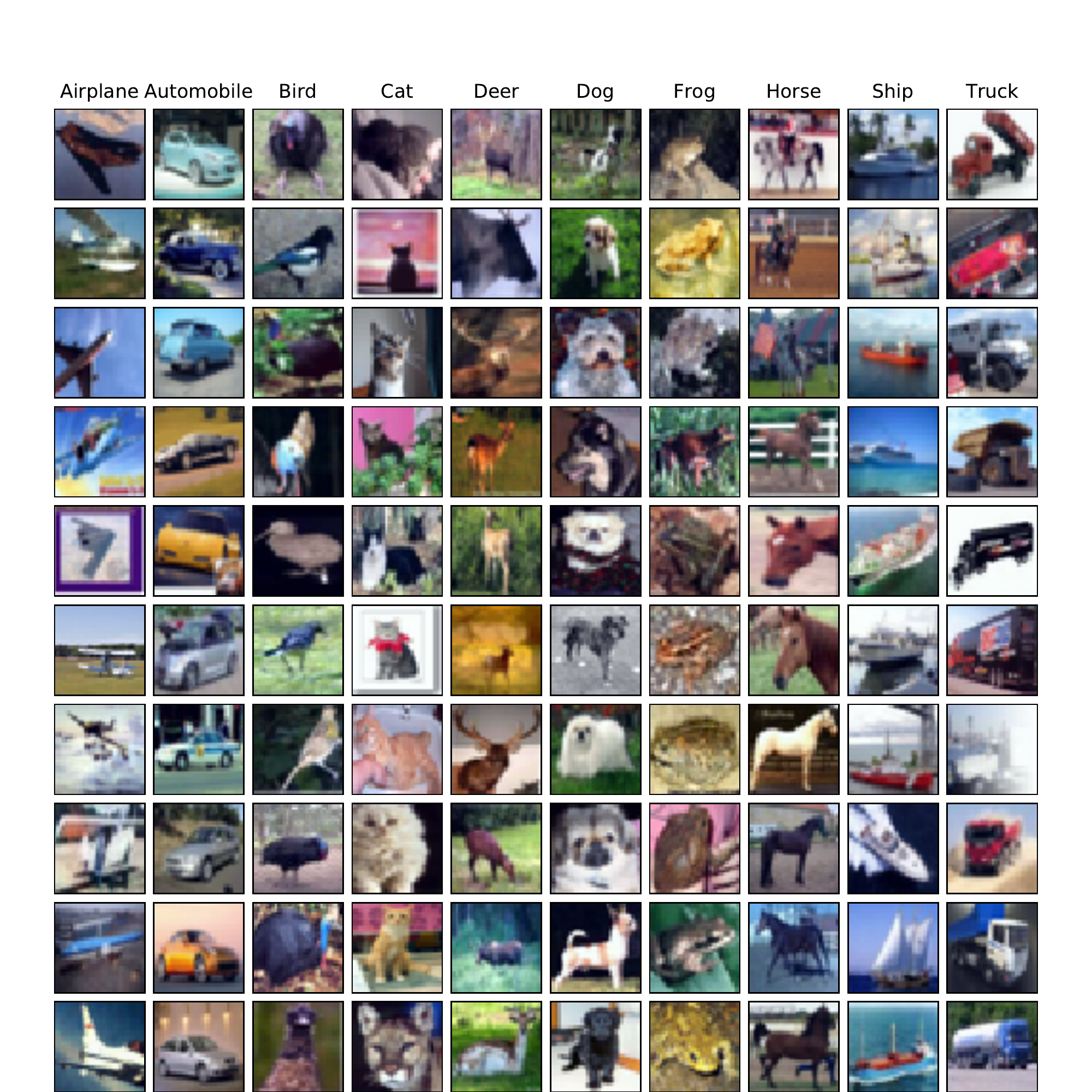}\\
\includegraphics[width=.45\linewidth]{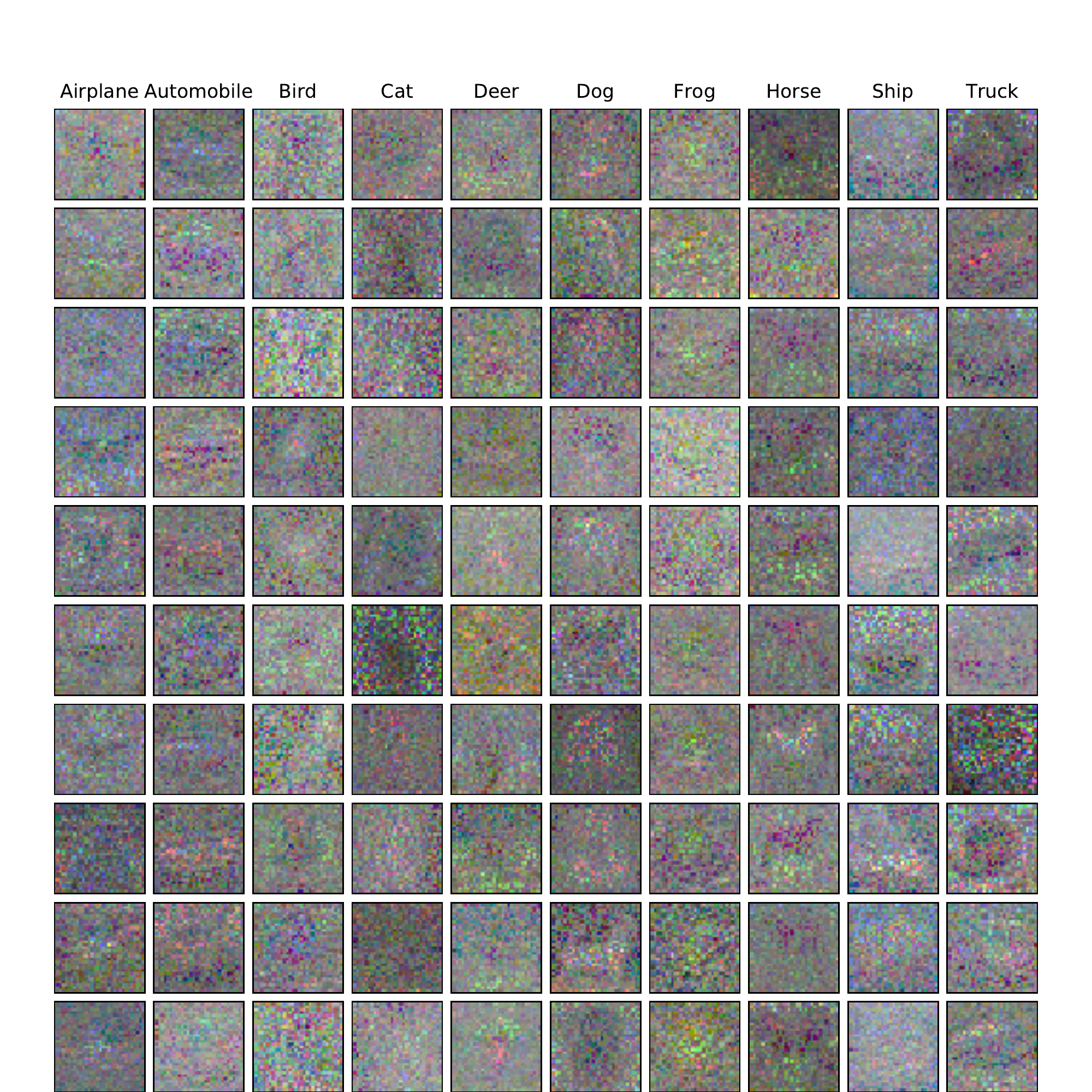}
\includegraphics[width=.45\linewidth]{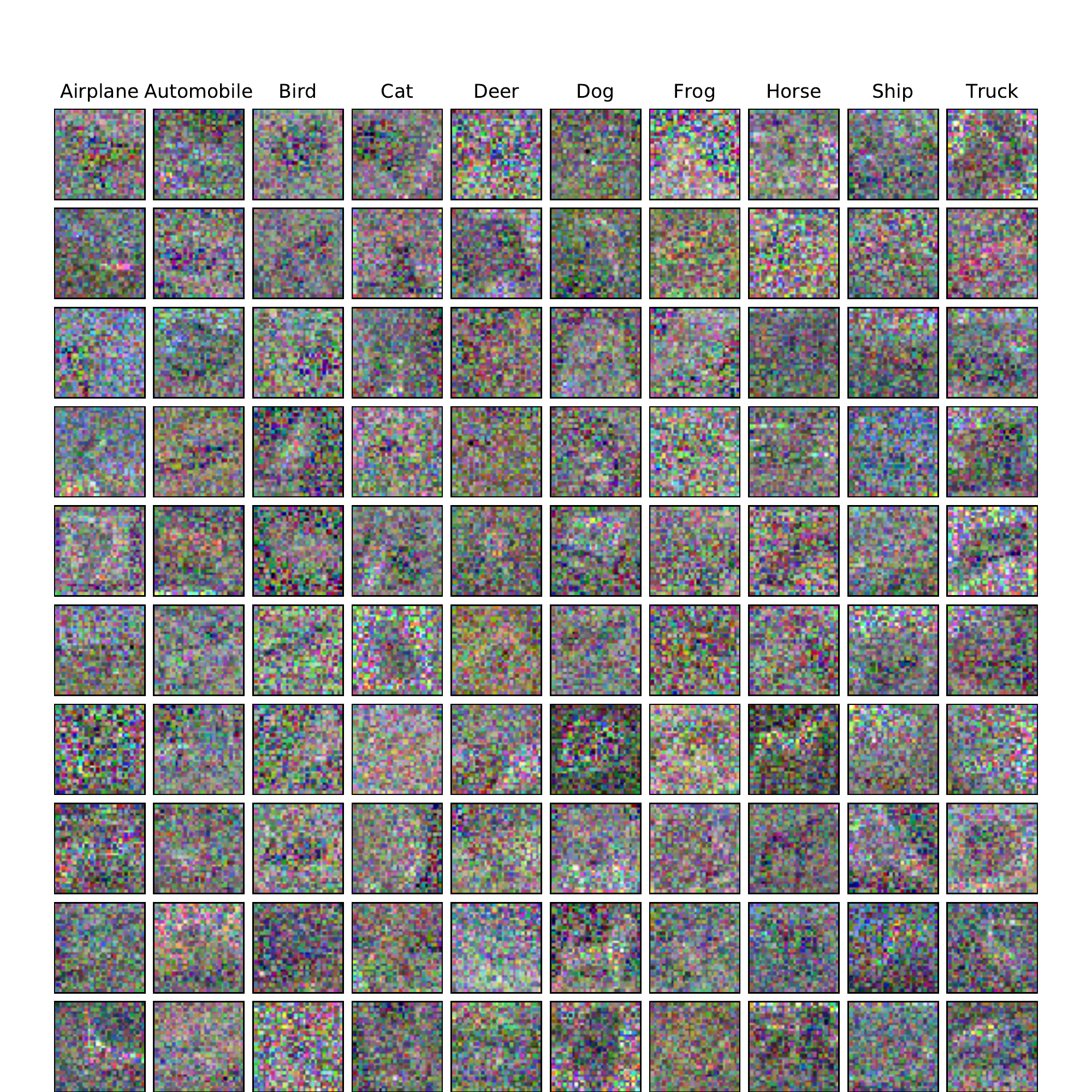}
\caption{\textbf{KIP learned images (left) vs natural CIFAR-10 images (right).}  Samples from 100 learned images. Top row: 0\% corruption. Bottom row: 90\% noise corruption.}
\label{appfig:cifar_sample_images}
\end{figure}

\end{document}